\newcommand{\Db}{{\mathbf D}}
\newcommand{\Rb}{{\mathbf R}}
\newcommand{\Wb}{{\mathbf W}}
\newcommand{\Bc}{{\mathcal B}}
\newcommand{\Dc}{{\mathcal D}}
\newcommand{\Ec}{{\mathcal E}}
\newcommand{\Xbc}{\boldsymbol{\mathcal X}}
\newcommand{\Ybc}{\boldsymbol{\mathcal Y}}
\newcommand{\Zbc}{\boldsymbol{\mathcal Z}}
\newcommand{\Wbc}{\boldsymbol{\mathcal W}}
\newcommand{\Rd}{{\mathbb R}}
\newcommand{\hank}{\mathbb{H}}
\newtheorem{theorem}{Theorem}
\newtheorem{corollary}[theorem]{Corollary}
\newtheorem{lemma}[theorem]{Lemma}
\newtheorem{proposition}[theorem]{Proposition}
\newcommand{\beq}{\begin{equation}}
\newcommand{\eeq}{\end{equation}}
\newcommand{\beqa}{\begin{eqnarray}}
\newcommand{\eeqa}{\end{eqnarray}}
\icmltitlerunning{Understanding Geometry of Encoder-Decoder CNNs}
\begin{document}

\twocolumn[
\icmltitle{Understanding Geometry of Encoder-Decoder CNNs}



\icmlsetsymbol{equal}{*}

\begin{icmlauthorlist}
\icmlauthor{Jong Chul Ye}{to,math}
\icmlauthor{Woon Kyoung Sung}{math}
\end{icmlauthorlist}

\icmlaffiliation{to}{Dept. of Bio/Brain Engineering, KAIST
 Daejeon 34141,
Republic of Korea.
}
\icmlaffiliation{math}{Dept. of Mathematical Sciences, KAIST, 
 Daejeon 34141,
Republic of Korea.}

\icmlcorrespondingauthor{Jong Chul Ye}{jong.ye@gmail.com}

\icmlkeywords{Encoder-decoder network,  expressivity, generalizability, optimization landscape, convolution framelets}

\vskip 0.3in
]



\printAffiliationsAndNotice{}  

\begin{abstract}
Encoder-decoder networks using convolutional neural network (CNN) architecture have been extensively used in deep learning literatures
thanks to its excellent performance for various inverse problems. 
However, 
it is still difficult to obtain coherent geometric view why such an architecture gives the desired performance.
Inspired by recent theoretical understanding on  {generalizability}, {expressivity}  and {optimization landscape} of neural networks, as well as the theory of deep 
convolutional framelets,
here we provide a unified theoretical framework  that leads to a better understanding of geometry of encoder-decoder CNNs.
Our unified framework shows that encoder-decoder CNN architecture is closely related to nonlinear frame  representation
using  combinatorial convolution frames, whose expressibility increases exponentially with the  depth.
We also 
demonstrate  the importance of skipped connection 
 in terms
of expressibility,  and  optimization landscape. 
		\vspace*{-0.5cm}
\end{abstract}

\section{Introduction}
\label{submission}

For the last decade, we have witnessed the unprecedented success of deep neural networks (DNN) in various applications in computer vision,
classification, medical imaging, etc.
Aside from traditional applications such as classification \cite{krizhevsky2012imagenet}, segmentation \cite{ronneberger2015u},
image denoising \cite{zhang2017beyond}, super-resolution \cite{kim2016accurate}, etc,
 deep learning approaches have already become the state-of-the-art technologies in various inverse problems in x-ray CT, MRI, etc \cite{kang2016deep,
jin2017deep,hammernik2018learning}

However, the more we see the success of deep learning, the more mysterious the nature of deep neural networks becomes.
In particular, the amazing aspects of {\em expressive power, generalization capability}, and  {\em optimization landscape} of DNNs have become an intellectual
challenge for  machine learning community, leading to many new theoretical results with varying capacities to facilitate the understanding of deep neural networks \cite{ge2017optimization,hanin2017relu,yarotsky2017error,nguyen2017loss,arora2016understanding,du2018gradient,raghu2017expressive,bartlett2017spectrally,neyshabur2018towards,nguyen2018optimization,rolnick2017power,shen2018differential}.

 \begin{figure*}[!hbt]
	\centerline{\includegraphics[width=14cm,height=4cm]{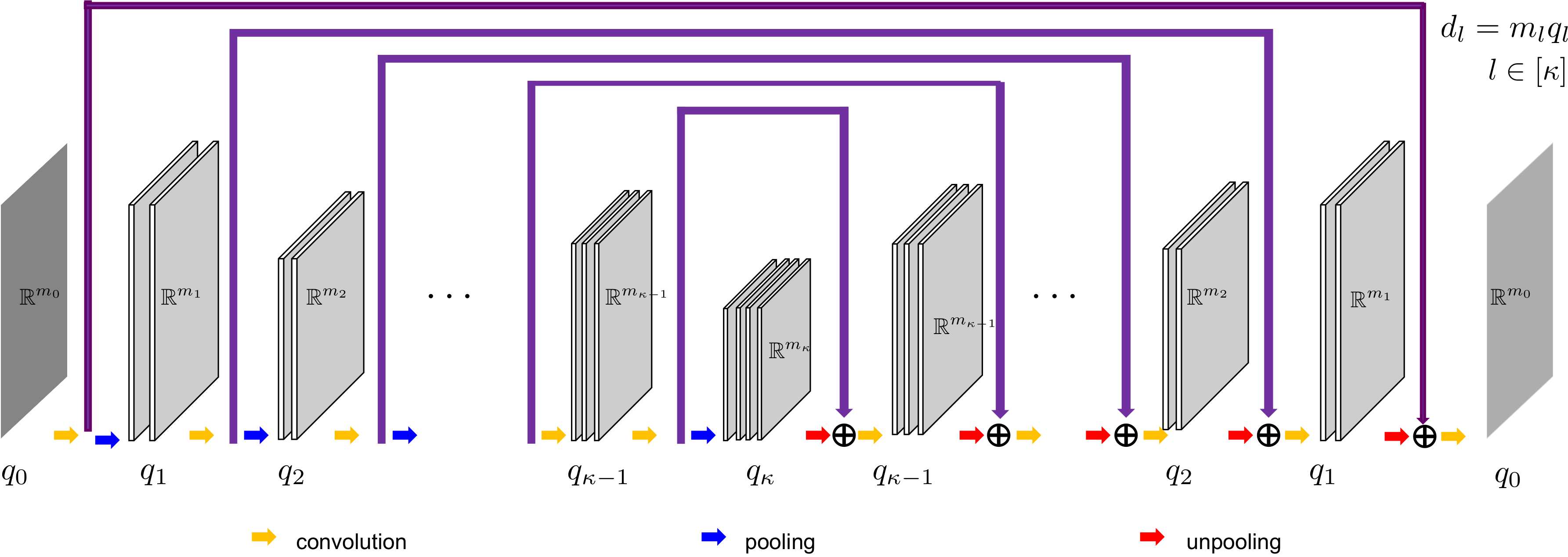}}
	\vspace*{-0.2cm}
	\caption{An architecture of $\kappa$-layer symmetric encoder-decoder CNN with skipped connections. Here, $q_l$ denotes the number of channels at the $l$-th layer, whereas $m_l$ refers to each channel dimension, and $d_l$ represents the total
	dimension of the feature at the $l$-th layer.}
		\vspace*{-0.5cm}
	\label{fig:network}
\end{figure*}

In inverse problems, one of the most widely employed network architectures is so-called encoder-decoder CNN 
architectures  
 \cite{ronneberger2015u}. In contrast to the simplified form of the neural
networks that are often used in  theoretical analysis,  these encoder-decoder CNNs usually have more complicated network
architectures such as symmetric network configuration,
 skipped connections, etc.
Therefore, it is not clear how the aforementioned theory can be used to understand the geometry of  encoder-decoder CNNs to examine the origin of their superior performance.

Recently, 
the authors in \cite{ye2017deep} proposed so-called deep convolutional framelets to explain the encoder-decoder CNN architecture
 from  a signal processing perspective.
The main idea is that a data-driven decomposition of Hankel matrix constructed from the input data 
provides  encoder-decoder layers that have striking similarity to the encoder-decoder CNNs.
However, one of the main weaknesses of the theory is that it is not clear where the exponential expressiveness comes from.
Moreover,  many theoretical issues of neural networks such as  generalizability and the optimization landscape, which have been extensively studied in machine learning literature,  have not been 
addressed.  

Therefore,  this work aims at filling the gap and finding the connections between machine learning and signal processing 
to provide a unified theoretical analysis  
that facilitates the geometric understanding of
encoder-decoder CNNs.
Accordingly, we  have revealed the following geometric features of encoder-decoder CNNs:
 \begin{itemize}
 \item  An encoder-decoder  CNN with an over-parameterized feature layer approximates  a map between two smooth manifolds that
 is decomposed as a high-dimensional embedding followed by a quotient map.
  \item An encoder-decoder CNN with ReLU nonlinearity can be understood as deep convolutional framelets that use combinatorial frames of spatially varying convolutions. Accordingly, the number of linear representations increases exponentially with the network depth.
This also suggests that the input space is divided into non-overlapping areas where each area shares the common linear representation.
\item We derive  an explicit form of the Lipschitz condition that determines the generalization capability of the encoder-decoder CNNs.  The expression shows  that 
the expressiveness of the network is not affected by the control of the Lipschitz constant.

 \item We provide explicit conditions under which the optimization landscape for  encoder-decoder CNNs is benign.
Specifically, we show that the skipped connection play important roles in smoothing out the optimization landscape.
 \end{itemize}

All the proof of the theorems and lemmas in this paper are  included in the Supplementary  Material.

 \section{Related Works}

%
Choromanska et al \cite{choromanska2015loss} employed the spin glass model
from statistical physics to analyze the representation power of deep neural networks.
Telgarsky constructs
interesting  classes of  functions that can be only computed efficiently
by deep ReLU nets, but not by shallower
networks with a similar number of parameters \cite{telgarsky2016colt}.
Arora et al \cite{arora2016understanding} showed that
for every natural number $k$ there exists a ReLU network with $k^2$ hidden layers and total size of
$k^2$, which can be represented by $\frac{1}{2}k^{k+1}-1$ neurons with at most $k$-hidden layers. 
All these results agree that the expressive power of deep neural networks increases exponentially with the network depth.

The generalization capability have been addressed in terms of various complexity measures
such as Rademacher complexity \cite{bartlett2002rademacher},  VC bound \cite{anthony2009neural}, 
Kolmorogov complexity \cite{schmidhuber1997discovering}, etc.
However, a recent work \cite{zhang2016understanding} showed intriguing results that these classical bounds are too pessimistic to explain the generalizability
 of  deep neural networks. Moreover,  it has been repeatedly shown that
over-parameterized  deep neural networks, which are trained with fewer samples than the number of neurons,
generalize well rather than overfitting \cite{cohen2018dnn,wei2018margin,brutzkus2017sgd,du2018power}, which phenomenon cannot be explained by 
 the classical complexity results.
%

The optimization landscape of neural networks have been another important theoretical
issue in neural networks.  Originally observed in  linear deep neural networks \cite{kawaguch2016nips}, the
benign optimization landscape  has been consistently observed
in various neural networks \cite{du2018gradient,nguyen2018optimization,du2017gradient,nguyen2017loss}.

However, these theoretical works mainly focus on simplified network architectures, and we are not aware of analysis for encoder-decoder CNNs.
%

\section{Encoder-Decoder CNNs}

    %

\subsection{Definition}

In this section,  we provide a formal definition 
of  encoder-decoder CNNs (E-D CNNs) to facilitate the theoretical analysis.
Although our definition is for 1-dimensional signals, its extension to 2-D images
is straightforward.

\subsubsection{Basic Architecture}

Consider  encoder-decoder networks in Fig.~\ref{fig:network}.
Specifically, the encoder network maps a given input signal $x\in\Xbc\subset \Rd^{d_0}$ to a 
 feature space $z \in \Zbc\subset \Rd^{d_\kappa}$, whereas the decoder
 takes this feature map  as an input, process it  and produce an output
$y \in \Ybc\subset \Rd^{d_L}$.
In this paper, symmetric configuration is considered so that
both encoder and decoder have the same number of layers, say $\kappa$;
the input and output dimensions for the encoder layer $\Ec^l$ and the decoder layer $\Dc^l$ are symmetric:
\begin{eqnarray*}
\Ec^l:\Rd^{d_{l-1}} \mapsto \Rd^{d_l},  \quad 
\Dc^l:\Rd^{d_{l}} \mapsto \Rd^{d_{l-1}}
\end{eqnarray*}
where $l\in [\kappa]$ with $[n]$ denoting the set $\{1,\cdots, n\}$;
and both input and output dimension is $d_0$.
More specifically, 
the $l$-th layer input signal for the encoder layer comes from $q_{l-1}$ number of input channels:
$$\xi^{l-1}=\begin{bmatrix} \xi_1^{l-1\top} & \cdots & \xi^{l-1\top}_{q_{l-1}} \end{bmatrix}^\top \in   \Rd^{d_{l-1}}, \quad  $$ 
where  $^\top$ denotes the transpose,
and $\xi_j^{l-1} \in \Rd^{m_{l-1}}$ refers to the $j$-th channel input with the dimension $m_{l-1}$.
Therefore, the overall input dimension is given by $d_{l-1}:= {m_{l-1}q_{l-1}}$.
Then,  the $l$-th layer encoder generates  $q_l$  channel output  using the convolution operation:
 \begin{eqnarray}\label{eq:encConv}
\xi_j^l = \sigma\left(\Phi^{l\top} \sum_{k=1}^{q_{l-1}}\left(\xi_k^{l-1}\circledast \overline \psi_{j,k}^l\right)\right) ,~j\in [q_l] 
\end{eqnarray}
where 
 $\xi_j^l \in \Rd^{m_l}$ refers to the $j$-th channel output after the
convolutional filtering with the $r$-tap filters $\overline\psi_{j,k}^l\in \Rd^r$  and pooling operation $\Phi^{l\top} \in \Rd^{m_l \times m_{l-1}}$, and 
 $\sigma(\cdot)$ denotes the element wise rectified linear unit (ReLU).
  More specifically, $\overline\psi_{j,k}^l\in \Rd^r$ denotes the $r$-tap convolutional kernel that is convolved with the $k$-th input  to contribute
  to the output of the $j$-th channel,  $\circledast$ is the circular convolution  via periodic boundary condition to avoid special treatment of the convolution at the boundary,
  and $\overline v$ refers to the flipped version of the vector $v$.    For the formal definition of the convolution operation used in this paper,  see Appendix~A in Supplementary Material. 

Moreover, as shown in Appendix~B in Supplementary Material, 
an equivalent matrix representation of  the encoder layer is then given by
\begin{eqnarray*}
 \xi^l:=\sigma(E^{l\top} \xi^{l-1})=\begin{bmatrix} \xi^{l\top}_1 & \cdots & \xi^{l\top}_{q_{l}} \end{bmatrix}^\top
\end{eqnarray*}
where
 $E^l \in \Rd^{d_{l-1}\times d_{l}}$   is computed by\footnote{Here, without loss of generality, bias term is not explicitly shown, since it can be incorporated into the matrix $E^l$ and $D^l$ as an additional column.}
  \begin{eqnarray}\label{eq:El}
 E^l= \begin{bmatrix} 
\Phi^l\circledast \psi^l_{1,1} & \cdots &  \Phi^l\circledast \psi^l_{q_l,1}  \\
  \vdots & \ddots & \vdots \\
\Phi^l\circledast \psi^l_{1,q_{l-1}} & \cdots &  \Phi^l\circledast \psi^l_{q_{l},q_{l-1}}
 \end{bmatrix}
 \end{eqnarray}
with 
\begin{eqnarray}
\begin{bmatrix} \Phi^l \circledast \psi_{i,j}^l  \end{bmatrix} :=\begin{bmatrix} \phi^l_1 \circledast \psi_{i,j}^l & \cdots & \phi^l_{m_l} \circledast  \psi_{i,j}^l\end{bmatrix}  \label{eq:defconv}
\end{eqnarray}

 On the other hand,
 the $l$-th layer input signal for the decoder layer comes from $q_{l}$ channel inputs, i.e. 
$\tilde\xi^{l}=\begin{bmatrix}\tilde\xi_1^{l\top} & \cdots & \tilde\xi^{l}_{q_{l}\top} \end{bmatrix}^\top \in   \Rd^{d_{l}},$
 and
 the decoder layer convolution is given by
  \begin{eqnarray}\label{eq:decConv}
\tilde\xi_j^{l-1} = \sigma\left(\sum_{k=1}^{q_{l}}\left(\tilde\Phi^l\tilde\xi^{l}_k\circledast  {\tilde\psi_{j,k}^l}\right)\right) ,\quad j\in [q_{l-1}] 
\end{eqnarray}
where  the unpooling layer is denoted by $\tilde\Phi^l \in \Rd^{m_{l-1}\times m_{l}}$.
Note that \eqref{eq:encConv} and \eqref{eq:decConv} differ in their
order of the pooling or unpooling layers.  Specifically, a pooling operation is applied after the convolution at the encoder layer, whereas, at the decoder,
an unpooling operation is performed  before the convolution  to maintain the symmetry of the networks.
In matrix form, 
a decoder layer is given by
\begin{eqnarray*}
\tilde \xi^{l-1}:=\sigma(D^l \tilde\xi^{l})=\begin{bmatrix}  \tilde\xi^{l-1\top}_1 &\cdots &
\tilde\xi^{l-1\top}_{q_{l-1}} \end{bmatrix}^\top  \
\end{eqnarray*}
where 
$D^l \in \Rd^{d_{l}\times d_{l-1}}$   is computed by
  \begin{eqnarray}\label{eq:Dl}
 D^l= \begin{bmatrix} 
\tilde\Phi^l\circledast \tilde\psi^l_{1,1} & \cdots &  \tilde\Phi^l\circledast \tilde\psi^l_{1,q_l}  \\
  \vdots & \ddots & \vdots \\
\tilde\Phi^l\circledast \tilde\psi^l_{q_{l-1},1} & \cdots &  \tilde\Phi^l\circledast \tilde\psi^l_{q_{l-1},q_{l}}
 \end{bmatrix}
 \end{eqnarray}

\subsubsection{E-D CNN with skipped connection}

As shown in Fig.~\ref{fig:network},
a skipped connection is often used to bypass an encoder layer output to a decoder layer.
The corresponding filtering operation at the $l$-th layer encoder is  described by
 \begin{eqnarray}\label{eq:encConvSkip}
\begin{bmatrix}\xi_j^l \\ \chi_j^l \end{bmatrix}
= \begin{bmatrix} \sigma\left(\Phi^{l\top} \sum_{k=1}^{q_{l-1}}\left(\xi_k^{l-1}\circledast \overline \psi_{j,k}^l\right)\right) \\ 
 \sigma\left(\sum_{k=1}^{q_{l-1}}\left(\xi_k^{l-1}\circledast \overline \psi_{j,k}^l\right)\right) \end{bmatrix}
\end{eqnarray}
where $\chi_j^l$ and $\xi_j^l$  denote the skipped output,  and the pooled output via $\Phi^{l\top}$, respectively, after the filtering with
$\overline\psi_{j,k}$.
As shown in Fig.~\ref{fig:network}, 
the skipped branch is no more filtered at the subsequent layer, but is merged at the symmetric decoder layer:
 \begin{eqnarray*}
\tilde\xi_j^{l-1} = \sigma\left(\sum_{k=1}^{q_{l}}\left((\tilde\Phi^{l}\tilde\xi^{l}_k+\chi^l_k)\circledast {\tilde\psi_{j,k}^{l}}\right)\right) \end{eqnarray*}

In matrix form, the encoder layer with the skipped connection can be represented by
$$
\Ec^l: \xi^{l-1} \mapsto  \begin{bmatrix} \xi^{l\top} & \chi^{l\top} \end{bmatrix}^\top$$
where  
\begin{eqnarray}
\xi^l := \sigma(E^{l\top} \xi^{l-1}) &, & 
\chi^l := \sigma(S^{l\top} \xi^{l-1}) \label{eq:feature}
\end{eqnarray}
where $E^l$ is given in \eqref{eq:El} and
the skipped branch filter matrix $S^l$ is represented by
 \begin{eqnarray}\label{eq:Sl}
 S^l= \begin{bmatrix} 
I_{m_{l-1}}\circledast \psi^l_{1,1} & \cdots &  I_{m_{l-1}}\circledast \psi^l_{q_l,1}  \\
  \vdots & \ddots & \vdots \\
I_{m_{l-1}}\circledast \psi^l_{1,q_{l-1}} & \cdots &  I_{m_{l-1}}\circledast \psi^l_{q_{l},q_{l-1}}
 \end{bmatrix}
 \end{eqnarray}
 where $I_{m_{l-1}}$ denotes the $m_{l-1}\times m_{l-1}$ identity matrix.
This implies that  we can regard the skipped branch as the identity pooling $I_{m_{l-1}}$ applied to the filtered signals.
Here, we denote the output dimension of the skipped connection
as $$s_l:=m_{l-1} q_l \quad .$$
Then, the skipped branch at the $l$-th encoder layer is merged at the $l$-th decoder layer,
which  is defined as
$$ \Dc^{l}: \begin{bmatrix} \tilde\xi^{l\top} & \chi^{l\top} \end{bmatrix}^\top \mapsto \tilde\xi^{l-1}
$$
where
\begin{eqnarray}\label{eq:sum}
\tilde\xi^{l-1}
:=\sigma(D^{l} \tilde\xi^{l}+\tilde S^{l}\chi^l)
\end{eqnarray}
and $D^{l}$ is  defined in \eqref{eq:Dl}, and 
$\tilde S^l$  is given by
\begin{eqnarray}\label{eq:El2}
 \tilde S^l= \begin{bmatrix} 
I_{m_{l-1}}\circledast \tilde\psi^l_{1,1} & \cdots &  I_{m_{l-1}}\circledast \tilde\psi^l_{1,q_{l}}  \\
  \vdots & \ddots & \vdots \\
I_{m_{l-1}}\circledast \tilde\psi^l_{q_{l-1},1} & \cdots &  I_{m_{l-1}}\circledast \tilde\psi^l_{q_{l-1},q_{l}}
 \end{bmatrix}
 \end{eqnarray}

\subsection{Parameterization of E-D CNNs}
		\vspace*{-0.2cm}

At the $l$-th encoder (resp. decoder) layer, there are $q_lq_{l-1}$ filter set that generates the $q_l$  (resp. $q_{l-1}$) output channels
from $q_{l-1}$ (resp. $q_{l}$) input channels. In many CNNs, the filter lengths are set to equal across the layer.
In our case, we set this as $r$, so the number of filter coefficients for the $l$-layer is 
$$n_l:=rq_lq_{l-1}, \quad l\in[\kappa]$$
These parameters  should be estimated during the training phase. 
Specifically,
by denoting the set of all parameter matrices  $\Wbc=\Wbc_E\times \Wbc_D$
where $\Wbc_E:=\Rd^{n_\kappa}  \times \cdots \times \Rd^{n_{1}}$ 
and $\Wbc_D:=\Rd^{n_1} \times \cdots \times \Rd^{n_\kappa}$,
we compose all layer-wise maps to
define an  encoder-decoder CNN as
\begin{eqnarray}\label{eq:Fcnn}
 z = F(\Wb,x)   . 
 \end{eqnarray}
Regardless of the existence of skipped connections, note that the same number of unknown parameters is used because the skipped connection uses the same set of filters.

		\vspace*{-0.1cm}
\section{Theoretical Analysis of E-D CNNs}
		\vspace*{-0.1cm}


\subsection{Differential Topology }

First, we briefly revisit  the work  by Shen \cite{shen2018differential}, which gives an topological insight on the E-D
CNNs.

\begin{proposition}[Extension of Theorem~3 in \cite{shen2018differential}]\label{thm:embedding}
Let $f : \Xbc\mapsto \Ybc \subset \Rd^q$ be a continuous map of smooth manifolds  such that $f=g \circ h$, where
$g : \Rd^p \mapsto  \Rd^q$ with $p\geq q$ is a Lipschitz continuous map.  If $p > 2 \dim\Xbc$, then there
exists a smooth embedding $\tilde h: \Xbc \mapsto \Rd^p$, so that the following inequality holds true for
a chosen norm and all $x\in \Xbc$ and $\epsilon >0$:
$$\|f(x) -g\circ \tilde h(x)\|\leq \epsilon$$
\end{proposition}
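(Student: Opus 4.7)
The plan is to read this as a direct consequence of the Whitney embedding theorem in its uniform approximation form, combined with the Lipschitz continuity of $g$ to transfer closeness in $\Rd^p$ to closeness in $\Rd^q$. Implicit in the factorization $f = g \circ h$ is a continuous map $h : \Xbc \to \Rd^p$ through which $f$ passes; the task is to replace $h$ by a smooth embedding $\tilde h$ without disturbing $f$ by more than $\epsilon$. The two active ingredients are (i) a density statement for smooth embeddings in the appropriate dimension, and (ii) the Lipschitz estimate for $g$.

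First, I would invoke Whitney's approximation theorem in the regime $p > 2 \dim\Xbc$, i.e.\ $p \geq 2\dim\Xbc + 1$. In this dimension the smooth embeddings $\Xbc \hookrightarrow \Rd^p$ form a dense subset of $C(\Xbc, \Rd^p)$ in the uniform topology (assuming the compactness of $\Xbc$ that is standard to the data-manifold setup used in the paper). Concretely, for a prescribed $\delta>0$ one first mollifies $h$ to a smooth approximant $h_\delta$, and then perturbs $h_\delta$ by a small generic map. The dimension count turns the obstructions to being an embedding, namely failure of injectivity of the map and of its differential, into transverse phenomena of positive codimension which can be cleared by an arbitrarily small perturbation. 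This produces a smooth embedding $\tilde h : \Xbc \to \Rd^p$ with
\begin{equation*}
\sup_{x\in\Xbc}\,\|h(x) - \tilde h(x)\| < \delta.
\end{equation*}

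Second, I would push this closeness through $g$. Writing $L$ for the Lipschitz constant of $g$, for every $x\in\Xbc$,
\begin{equation*}
\|f(x) - g\!\circ\!\tilde h(x)\| \;=\; \|g(h(x)) - g(\tilde h(x))\| \;\leq\; L\,\|h(x) - \tilde h(x)\| \;<\; L\,\delta.
\end{equation*}
Choosing $\delta = \epsilon/L$ in the first step closes the argument.

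The main obstacle I expect is the density-of-embeddings step rather than the Lipschitz estimate: mere existence of an embedding $\Xbc\hookrightarrow\Rd^{2\dim\Xbc+1}$ is the classical weak Whitney theorem, but uniform approximability of an arbitrary continuous $h$ by such embeddings requires the transversality/genericity machinery from Hirsch-type differential topology. Since the statement is advertised as an extension of Shen's theorem, I would lean on that machinery directly rather than re-derive it. A minor secondary subtlety is the standing assumption on $\Xbc$: without compactness one must replace the uniform bound by a locally uniform one or modulate $\delta$ by a positive continuous function, but under the data-manifold hypothesis of the paper the compact case suffices.
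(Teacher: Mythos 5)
Your proposal follows essentially the same route as the paper's own proof: approximate $h$ uniformly by a smooth embedding $\tilde h$ via the weak Whitney embedding theorem in dimension $p > 2\dim\Xbc$, then push the $\delta$-closeness through the Lipschitz map $g$ and set $\delta = \epsilon/L$. Your added remarks on why the density-of-embeddings step (rather than mere existence of an embedding) is the real content, and on the compactness caveat, are more careful than the paper's one-line appeal to Whitney, but the argument is the same.
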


Here, $p > 2 \dim\Xbc$ comes from the weak Whitney embedding theorem \cite{whitney1936differentiable,tu2011introduction}.
Note that
Theorem~\ref{thm:embedding} informs that a neural network,  designed as a continuous map of smooth manifolds,
 can be considered as
an approximation of a task map  that is composed of a smooth embedding followed by an additional
map.
In fact,  this decomposition is quite general for a map between smooth manifolds as shown in the following proposition:
 \begin{proposition}\label{thm:quotient}\cite{shen2018differential}
 Let $f : \Xbc\mapsto \Ybc\subset \Rd^q$ be a map of smooth manifolds, then the task $ f$ admits a decomposition of
$f = g\circ h$, where $ h: \Xbc \mapsto \Zbc \subset \Rd^p$ with $p \geq 2 \dim \Xbc$ is a smooth embedding.
Furthermore, the task map $f$ is a quotient map, if and only if the  map $g$ is a
quotient map.
\end{proposition}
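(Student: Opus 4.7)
The plan is to split the statement into two parts: first the existence of the decomposition $f = g\circ h$ with $h$ a smooth embedding, and second the biconditional characterization of $f$ being a quotient map in terms of $g$. The whole argument will hinge on the fact that a smooth embedding, viewed onto its image, is a homeomorphism, so composing with $h$ preserves all topological properties one cares about.

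For the existence part I would invoke the weak Whitney embedding theorem directly: since $\Xbc$ is a smooth manifold, for any $p\geq 2\dim\Xbc$ there exists a smooth embedding $h:\Xbc\mapsto \Rd^p$, producing a compact (if $\Xbc$ is) smoothly embedded copy $h(\Xbc)\subset \Rd^p$. Let $\Zbc:=h(\Xbc)$, which I would take to be the target manifold in the statement (or enlarge it trivially so that $\Zbc\subset \Rd^p$ is any smooth manifold containing $h(\Xbc)$). Because $h$ is an injective immersion that is a homeomorphism onto its image, the set-theoretic inverse $h^{-1}:h(\Xbc)\mapsto \Xbc$ is continuous, and I can define
\[
g_0:h(\Xbc)\mapsto \Ybc,\qquad g_0(z):=f\bigl(h^{-1}(z)\bigr).
\]
Then $g_0\circ h = f$ tautologically, and $g_0$ is continuous as a composition of continuous maps. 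To produce a map $g:\Rd^p\mapsto \Rd^q$ defined on the whole ambient space (as the proposition phrases it), I would extend $g_0$ by Tietze's extension theorem applied coordinate-wise, using that $h(\Xbc)$ is closed in $\Rd^p$ (true when $\Xbc$ is compact; the non-compact case can be handled by a tubular-neighborhood retraction composed with $g_0$). The equation $f=g\circ h$ holds on $\Xbc$ regardless of how the extension is chosen.

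For the quotient equivalence, I would use that $h:\Xbc\mapsto h(\Xbc)$ is a homeomorphism, hence in particular a quotient map, and that the composition of quotient maps is a quotient map. Assuming $g$ (or more precisely its restriction $g_0$ to $h(\Xbc)$) is a quotient map, then $f = g_0 \circ h$ is a composition of quotient maps onto $\Ybc$, so $f$ is a quotient map. Conversely, assuming $f$ is a quotient map, observe that $g_0 = f\circ h^{-1}$ as a map $h(\Xbc)\mapsto \Ybc$, where $h^{-1}$ is a homeomorphism; again the composition of quotient maps is a quotient map, so $g_0$ is a quotient map. The only subtlety is that the statement speaks of $g$ defined on all of $\Rd^p$, whereas the quotient property is really about $g_0 = g|_{h(\Xbc)}$; I would note that $f$ is surjective onto $\Ybc$ iff $g_0$ is, and a set $U\subset\Ybc$ is open iff $f^{-1}(U)$ is open iff $h\bigl(f^{-1}(U)\bigr) = g_0^{-1}(U)$ is open in $h(\Xbc)$, giving the equivalence directly.

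The main obstacle I anticipate is the clean handling of the extension from $g_0$ on $h(\Xbc)$ to a map $g$ defined on all of $\Rd^p$ (or a manifold $\Zbc$ containing the image), especially in the non-compact case where $h(\Xbc)$ need not be closed. The standard fix is to replace $\Rd^p$ by a tubular neighborhood of $h(\Xbc)$ and precompose $g_0$ with the smooth retraction onto $h(\Xbc)$; one then extends by any continuous map outside. All other steps are essentially bookkeeping: Whitney gives $h$, injectivity of $h$ gives $g_0$ uniquely, and the quotient-map equivalence is an immediate consequence of the general fact that composing with a homeomorphism preserves the quotient property.
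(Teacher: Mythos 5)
Your argument is correct, and it is the standard one. Note, however, that the paper itself does not supply a proof of this proposition --- it is imported verbatim from \cite{shen2018differential} --- so there is no in-paper argument to compare against; your construction (Whitney embedding for $h$, the tautological $g_0 = f\circ h^{-1}$ on $h(\Xbc)$, and the observation that pre- or post-composing with a homeomorphism preserves the quotient property) is exactly what the cited source does. Two minor points: achieving $p = 2\dim\Xbc$ exactly requires the \emph{strong} Whitney embedding theorem (the weak version, which the paper invokes for Proposition~\ref{thm:embedding}, only gives $p > 2\dim\Xbc$); and since the proposition takes $\Zbc\subset\Rd^p$ as the codomain of $h$, you may simply set $\Zbc = h(\Xbc)$ and take $g=g_0$, which makes the Tietze/tubular-neighborhood extension you worry about unnecessary here.
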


To understand the meaning of the last sentence in Proposition~\ref{thm:quotient},
we briefly review the concept of the quotient space and quotient map \cite{tu2011introduction}.
Specifically, let $\sim$ be an equivalence relation on $\Xbc$.  Then, the quotient space, $\Ybc = \Xbc/\sim$ is defined to be the set of equivalence classes of elements of $\Xbc$.
For example,
 we can declare images perturbed by noises as an equivalent class such that
our quotient map is designed to map the noisy signals to its noiseless equivalent image.

It is remarkable that Proposition~\ref{thm:embedding} and Proposition~\ref{thm:quotient} 
give interpretable conditions for design parameters such as network width (i.e. no of channels), pooling layers, etc. For example, if there are no pooling layers, the dimensionality conditions in Proposition~\ref{thm:embedding} and Proposition~\ref{thm:quotient}
can be easily met in practice by increasing the number of channels more than twice the input channels.
 With the pooling layers, one could calculate the number of channels in a similar way.  
 In general, Proposition~\ref{thm:embedding} and Proposition~\ref{thm:quotient} 
strongly
suggest an encoder-decoder architecture with the constraint $d_0\leq d_1\leq \cdots \leq d_\kappa$ with $d_\kappa > 2 d_0$, where  an encoder  maps an input signal
to higher dimensional feature space whose dimension is at least twice bigger than the input space. 
Then, the decoder determines the nature of the overall neural network.


\subsection{Links to the frame representation}

One of the important contributions of recent theory of deep convolutional framelets \cite{ye2017deep} is that
encoder-decoder CNNs have an interesting link to multi-scale convolution framelet expansion.
To see this,  we first define filter matrices $\Psi^l \in \Rd^{rq_{l-1}\times q_{l}}$ and $\tilde\Psi^{l} \in \Rd^{rq_{l-1}\times  q_{l}}$
for encoder and decoder:
  $$ 
  \Psi^l :=
  \begin{bmatrix} 
 \psi^l_{1,1}& \cdots &   \psi^l_{q_l,1} \\
  \vdots & \ddots & \vdots \\
 \psi^l_{1,q_{l-1}} & \cdots &   \psi^l_{q_{l},q_{l-1}}  
 \end{bmatrix} 
 $$
 $$
 \tilde\Psi^{l} :=\begin{bmatrix} 
\tilde\psi^{l}_{1,1}& \cdots &  \tilde\psi^{l}_{1,q_{l}} \\
  \vdots & \ddots & \vdots \\
\tilde\psi^{l}_{q_{l-1},1} & \cdots &  \tilde\psi^{l}_{q_{l-1},q_{l}}  
 \end{bmatrix} 
 $$
Then, the following proposition, which is novel and significantly extended from \cite{ye2017deep}, states the 
importance of  the frame conditions for the pooling layers and filters to obtain convolution framelet expansion \cite{yin2017tale}.
\begin{proposition}\label{thm:PR}
Consider an encoder-decoder CNN without ReLU nonlinearities.
Let $\Phi^{l\top}$ and $\tilde\Phi^{l}$ denote the $l$-th encoder and decoder layer pooling  layers, respectively,
and $\Psi^l$ and $\tilde\Psi^{l}$ refer to the
encoder and decoder filter matrices. Then, the following statements are true.

1) For the encoder-decoder CNN without skipped connection,
if the following frame conditions are satisfied for all $l\in [\kappa]$
\begin{eqnarray}\label{eq:PRl}
 \tilde\Phi^{l}\Phi^{l\top}=\alpha I_{m_{l-1}},~
  \Psi^l \tilde\Psi^{l\top} =   \frac{1}{r\alpha}I_{rq_{l-1}}  
\end{eqnarray}
then we have
\begin{eqnarray}\label{eq:PR0}
x 
&=& \sum_{i} \langle b_i, x \rangle \tilde b_i
\end{eqnarray}
where $b_i$ and $\tilde b_i$ denote the $i$-th column of the following frame basis and its dual:
\begin{eqnarray}
B&=& E^1E^2 \cdots E^{\kappa},~\quad \label{eq:Bc}\\
\tilde B &=& D^1D^2 \cdots D^{\kappa} \label{eq:tBc}
\end{eqnarray}

2) For the encoder-decoder CNN with skipped connection,
if the following frame conditions are satisfied for all $l\in [\kappa]$:
\begin{eqnarray}\label{eq:PRl2}
 \tilde\Phi^{l}\Phi^{l\top}=\alpha I_{m_{l-1}},~
  \Psi^l \tilde\Psi^{l\top} =    \frac{1}{r(\alpha+1)}I_{rq_{l-1}} 
\end{eqnarray}
then \eqref{eq:PR0} holds,
where $b_i$ and $\tilde b_i$ denote the $i$-th column of the following  frame and its duals:
\begin{eqnarray}\label{eq:Btot}
B^{skp}  \quad ( \in \Rd^{d_0\times (d_\kappa+\sum_{l=1}^\kappa s_l}))
\end{eqnarray}
$$:= \begin{bmatrix} E^1\cdots E^\kappa &E^1\cdots E^{\kappa-1}S^\kappa & \cdots   & E^1S^2& S^1 \end{bmatrix}$$
\begin{eqnarray}\label{eq:tBtot}
\tilde B^{skp}  \quad ( \in \Rd^{d_0\times (d_\kappa+\sum_{l=1}^\kappa s_l}))
\end{eqnarray}
$$:= \begin{bmatrix} D^1\cdots D^\kappa &D^1\cdots D^{\kappa-1}\tilde S^\kappa & \cdots   & D^1\tilde S^2& \tilde S^1 \end{bmatrix}$$
%
\end{proposition}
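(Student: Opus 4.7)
The plan is to reduce both parts of the proposition to a layer-by-layer frame identity and then telescope up the depth. Since no ReLU appears, the network acts as a linear operator, so the reconstruction $x = \sum_i \langle b_i, x\rangle \tilde b_i$ is equivalent to $\tilde B B^\top = I_{d_0}$ in part~(1), and to $\tilde B^{skp}(B^{skp})^\top = I_{d_0}$ in part~(2), once one notes that the columns of $B$ (resp.\ $\tilde B$) are exactly the $b_i$ (resp.\ $\tilde b_i$). Writing $\tilde B B^\top = D^1 D^2 \cdots D^\kappa\, E^{\kappa\top}\cdots E^{1\top}$, part~(1) follows by a straightforward telescoping induction once one establishes the single-layer identity $D^l E^{l\top} = I_{d_{l-1}}$ for every $l\in[\kappa]$.

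First I would prove this single-layer identity by rewriting each block of $E^l$ and $D^l$ as a circulant matrix times the (un)pooling matrix: under the convention $a \circledast b = C_b a$, one has $\Phi^l \circledast \psi^l_{j,k} = C_{\psi^l_{j,k}} \Phi^l$ and $\tilde\Phi^l \circledast \tilde\psi^l_{i,j} = C_{\tilde\psi^l_{i,j}} \tilde\Phi^l$, so that the $(i,k)$-th block of $D^l E^{l\top}$ becomes $\sum_{j=1}^{q_l} C_{\tilde\psi^l_{i,j}}\, (\tilde\Phi^l \Phi^{l\top})\, C_{\psi^l_{j,k}}^\top = \alpha \sum_j C_{\tilde\psi^l_{i,j}} C_{\psi^l_{j,k}}^\top$ by the pooling frame condition. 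Expanding entrywise, each entry becomes $\sum_q \sum_j \tilde\psi^l_{i,j}[q]\, \psi^l_{j,k}[(n'-n+q)\bmod m_{l-1}]$; the filter frame condition $\Psi^l \tilde\Psi^{l\top} = \frac{1}{r\alpha} I_{rq_{l-1}}$ reads $\sum_j \tilde\psi^l_{i,j}[q]\, \psi^l_{j,k}[q'] = \frac{1}{r\alpha}\delta_{i,k}\delta_{q,q'}$ for $q,q'\in[r]$, and collapses the double sum to $\frac{1}{\alpha}\delta_{i,k}\delta_{n,n'}$; multiplying by $\alpha$ yields $\delta_{i,k} I_{m_{l-1}}$, i.e., $D^l E^{l\top} = I_{d_{l-1}}$.

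For part~(2), the new ingredient is the skip branch; the augmented single-layer identity I would prove is $D^l E^{l\top} + \tilde S^l S^{l\top} = I_{d_{l-1}}$. Since $S^l$ and $\tilde S^l$ have the same block structure as $E^l$ and $D^l$ but with the pooling matrices replaced by identities, the same circulant computation yields $(\tilde S^l S^{l\top})_{(i,k)} = \sum_j C_{\tilde\psi^l_{i,j}} C_{\psi^l_{j,k}}^\top$, so the two contributions combine into $(\alpha + 1)\sum_j C_{\tilde\psi^l_{i,j}} C_{\psi^l_{j,k}}^\top$, which by the revised filter condition $\Psi^l \tilde\Psi^{l\top} = \frac{1}{r(\alpha+1)}I$ equals $\delta_{i,k}I_{m_{l-1}}$. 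The telescoping is slightly more subtle because of the cross terms that appear in $\tilde B^{skp}(B^{skp})^\top$; I would set $A_l := D^l\cdots D^\kappa\, E^{\kappa\top}\cdots E^{l\top} + \sum_{j=l}^\kappa D^l\cdots D^{j-1}\, \tilde S^j S^{j\top}\, E^{(j-1)\top}\cdots E^{l\top}$ with the convention $A_{\kappa+1} := I$, verify the recursion $A_l = D^l A_{l+1} E^{l\top} + \tilde S^l S^{l\top}$, and combine with the augmented identity to propagate $A_{l+1} = I$ to $A_l = I$; downward induction gives $\tilde B^{skp}(B^{skp})^\top = A_1 = I_{d_0}$.

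The main obstacle will be the careful bookkeeping of block indices together with the zero-padded circulant convention that embeds each $r$-tap filter into an $m_{l-1}$-dimensional circulant: the factor $r$ emerging from the sum over $q\in[r]$ must precisely cancel the $1/r$ in the filter frame condition, and the shift in normalization from $\frac{1}{r\alpha}$ to $\frac{1}{r(\alpha+1)}$ must be read as the exact compensation for the extra energy injected by the identity-pooled skip branch. Once the indexing is unpacked, everything reduces to linear algebra and the commutativity of circulant matrices, with no deeper machinery required.
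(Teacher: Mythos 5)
Your proposal is correct and follows essentially the same route as the paper: reduce the reconstruction identity to the per-layer relations $D^lE^{l\top}=I_{d_{l-1}}$ (resp.\ $D^lE^{l\top}+\tilde S^lS^{l\top}=I_{d_{l-1}}$) via the two frame conditions, and then telescope across layers. The only cosmetic difference is that you organize the single-layer block computation with circulant factorizations and entrywise index bookkeeping, whereas the paper does the same computation with Hankel matrices and periodic-shift matrices $P_k$, applying the filter condition before the pooling condition; the cancellation of the factors $r$, $\alpha$, and $\alpha+1$ works out identically in both formulations.
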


Furthermore, the following corollary
shows that the total basis and its dual indeed come from multiple convolutional
operations across layers:
\begin{corollary}\label{thm:multi}
If there exist no pooling layers, then the $t$-th block of the frame basis matrix for $t\in[q_l]$ is given by
$$\left[E^{1}\cdots E^{l}\right]_{t}=\left[E^{1}\cdots E^{l-1}S^l \right]_{t}$$
$$ =I_m \circledast \left(\sum_{j_{l-1},\cdots, j_1=1}^{q_{l-1},\cdots,q_1}  \psi_{j_1,1}^l\circledast \cdots \circledast \psi_{t,j_{l-1}}^{l} \right)$$ 
Similarly,
$$\left[D^{1}\cdots D^{l}\right]_{t}=\left[D^{1}\cdots D^{l-1}\tilde S^l \right]_{t}$$
$$ =I_m\circledast \left(\sum_{j_{l-1},\cdots, j_1=1}^{q_{l-1},\cdots,q_1}  \tilde\psi_{j_1,1}^l\circledast \cdots \circledast {\tilde\psi}_{t,j_{l-1}}^{l} \right)$$ 
\end{corollary}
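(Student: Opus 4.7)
The plan is to reduce both equalities to a single algebraic fact about products of convolution operators, then iterate. First, I observe that in the absence of pooling the pooling matrices satisfy $\Phi^l = I_{m_{l-1}}$ with $m_l = m_{l-1} = m$, so comparing the block definitions of $E^l$ in \eqref{eq:El} and $S^l$ in \eqref{eq:Sl} immediately yields $E^l = S^l$. This gives the first equality $[E^{1}\cdots E^{l}]_{t}=[E^{1}\cdots E^{l-1}S^{l}]_{t}$ with no further work, and the analogous identification $D^l = \tilde S^l$ handles the decoder side.

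The substantive content is the closed-form expression on the right-hand side, which I would obtain through the following algebraic lemma: for any $a, b \in \Rd^{r}$,
\begin{equation*}
(I_m \circledast a)\,(I_m \circledast b) \;=\; I_m \circledast (a\circledast b).
\end{equation*}
This follows by identifying $I_m \circledast v$ with the $m\times m$ circulant matrix of convolution by $v$ (see \eqref{eq:defconv} and Appendix~A), together with the standard fact, via simultaneous Fourier diagonalization, that the product of two circulants is the circulant of the convolution of their generators. Bilinearity of $\circledast$ also gives $(I_m\circledast a)+(I_m\circledast b)=I_m\circledast(a+b)$, which I will use to pull summations outside.

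With the lemma in hand I would prove the expression for $[E^1\cdots E^l]_t$ by induction on $l$. Expanding the $t$-th column block of the product via block matrix multiplication, using that the $(k_{r-1},k_r)$ block of $E^r$ is $I_m\circledast\psi^r_{k_r,k_{r-1}}$, expresses it as a sum over the intermediate channel indices $j_1,\ldots,j_{l-1}$ of iterated products
\begin{equation*}
(I_m \circledast \psi^1_{j_1,1})(I_m \circledast \psi^2_{j_2, j_1}) \cdots (I_m \circledast \psi^l_{t,j_{l-1}}).
\end{equation*}
Repeated application of the lemma collapses each summand to a single $I_m \circledast (\psi^1_{j_1,1} \circledast \cdots \circledast \psi^l_{t,j_{l-1}})$, and pulling the sum inside by bilinearity yields the stated formula. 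The decoder identity for $[D^1\cdots D^l]_t$ follows verbatim from the analogous block structure of $D^l$ in \eqref{eq:Dl} and $\tilde S^l$ in \eqref{eq:El2}.

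The main obstacle I anticipate is bookkeeping rather than substance: one must track carefully how the two subscripts of $\psi^r_{j_r, j_{r-1}}$ (output channel, input channel) correspond to the block-column and block-row indices of $E^r$ in the convention of \eqref{eq:El}, so that after the telescoping of block multiplications the summation indices $j_1,\ldots,j_{l-1}$ and the fixed $t$ land in exactly the stated positions. Beyond this indexing care, the proof is essentially two lines of convolution algebra followed by a straightforward induction on $l$.
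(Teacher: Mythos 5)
Your proposal is correct and follows essentially the same route as the paper: identify $E^l=S^l$ and $D^l=\tilde S^l$ in the absence of pooling, establish the key identity $(I_m\circledast v)(I_m\circledast w)=I_m\circledast(v\circledast w)$ for the circulant blocks (the paper proves it by a direct column computation rather than Fourier diagonalization, but this is immaterial), and then induct on $l$ using block multiplication and bilinearity. No gaps.
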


This suggests that  the length of the convolutional filters increases with the depth by cascading multiple convolution
operations across the layers.  
While Proposition~\ref{thm:PR} informs that the skipped connection
increases the dimension of the feature space 
 from $d_\kappa$ to $d_\kappa+\sum_{l=1}^\kappa s_l$, 
Corollary~\ref{thm:multi}  suggest  that the cascaded expression of the filters becomes
 more diverse for the case of encoder-decoder CNNs with skipped connection. Specifically, instead of convolving all $\kappa$ layers of filters, the skipped connection allows
 the combination of subset of filters. All these make  the frame representation from skipped connection more expressive.

\subsection{Expressiveness }

However, to satisfy the frame conditions \eqref{eq:PRl} or \eqref{eq:PRl2},  we need $q_l\geq rq_{l-1}$ so that
the number of output filter channel $q_l$ should increase
exponentially. 
 While this condition can be relaxed when the underlying signal has low-rank Hankel matrix structure \cite{ye2017deep}, the explicit use of the frame condition is still rarely observed. 
Moreover, in contrast to the classical wavelet analysis,
 the perfect reconstruction condition itself is not interesting in neural networks, since the output of the network 
should be different from the input due to the task dependent processing. 

Here, we claim that one of the important roles of using ReLU is that it allows combinatorial basis selection such that
exponentially large number of basis expansion is feasible once the network is trained.
This is in contrast with the standard framelet basis estimation.
For example, for a given target data
$Y = \begin{bmatrix} y^{(1)} & \cdots & y^{(T)} \end{bmatrix}$  and the input data $X= \begin{bmatrix} x^{(1)} & \cdots & x^{(T)} \end{bmatrix}$,
the estimation problem of the frame basis and its dual in Proposition~\ref{thm:PR}
is optimal for the given training data,  but the network is not expressive and 
does not generalize well when the different type of input data is given.
Thus, one of the important requirements  is to allow large number of  expressions that are adaptive to the different
inputs.

Indeed,  ReLU nonlinearity makes the network more expressive. For example, consider a trained two layer encoder-decoder CNN:
\begin{eqnarray}
y = \tilde B \Lambda(x) B^\top x
\end{eqnarray}
where $\tilde B\in \Rd^{d_0\times d_1}$ and $ B\in \Rd^{d_0\times d_1}$ 
and $\Lambda(x)$ is a diagonal matrix with 0, 1 elements that are determined by the ReLU output.
Now, the matrix can be equivalently represented by
\begin{eqnarray}\label{eq:proj}
\tilde B\Lambda(x)B^\top = \sum_{i=1}^{d_1} \sigma_i(x) \tilde b_i b_i^{\top} 
\end{eqnarray}
where $\sigma_i(x)$ refers to the $(i,i)$-th diagonal element of $\Lambda(x)$.
Therefore, depending on the input data $x\in \Rd^{d_0}$,  $\sigma_i(x)$ is either 0 or 1 so that a maximum  $2^{d_1}$ distinct
configurations of the matrix can be represented using \eqref{eq:proj}, which 
is significantly more expressive than using  the single representation with the frame and its dual.
This  observation can be generalized as shown in Theorem~\ref{thm:decexp}.

\begin{theorem}[Expressiveness of encoder-decoder networks]\label{thm:decexp}
Let
\begin{eqnarray}
 \tilde\Upsilon^l= \tilde\Upsilon^l(x) :=  \tilde\Upsilon^{l-1} \tilde\Lambda^{l}(x) D^{l} ,~ \label{eq:UD0} \\
   \Upsilon^{l}=   \Upsilon^{l}(x) := \Upsilon^{l-1} E^{l} \Lambda^{l}(x)  ,~ \label{eq:UE0}
\end{eqnarray}
with $ \tilde\Upsilon^0(x) =I_{d_0}$ and $ \Upsilon^{0}(x) =I_{d_0}$, and
\begin{eqnarray}
 M^l= M^l(x) := S^{l}\Lambda_S^{l}(x)\label{eq:M0} \\
\tilde M^l=\tilde M^l(x) := \tilde \Lambda^{l}(x)\tilde S^{l} \label{eq:tM0} 
\end{eqnarray}
where   $\Lambda^l(x)$ and  $\tilde\Lambda^l(x)$ refer to the  diagonal matrices from ReLU at the $l$-th layer encoder and decoder, respectively,
which have 1 or 0 values;  $\Lambda_S^l(x)$  refers to a similarly defined diagonal matrices from ReLU at the  $l$-th skipped branch
of encoder. 
Then, the following statements are true.

1) Under ReLUs, an encoder-decoder CNN  without skipped connection can be represented by
\begin{eqnarray}\label{eq:feed}
y =  \tilde\Bc(x)\Bc^{\top}(x)x  = \sum_i \langle x, b_i(x) \rangle \tilde b_i(x) 
\end{eqnarray}
where
\begin{eqnarray}\label{eq:Bcx}
 \Bc(x) = \Upsilon^\kappa(x)&,& \tilde \Bc(x) = \tilde\Upsilon^\kappa(x)
\end{eqnarray}
Furthermore,  the maximum number of available linear representation is given by
\begin{eqnarray}\label{eq:nproj}
 N_{rep} = 2^{\sum_{i=1}^{\kappa}d_i-d_\kappa},\quad
 \end{eqnarray}
 
 2) An encoder-decoder CNN  with skipped connection under ReLUs is given by
 \begin{eqnarray}\label{eq:skipnet}
 y = \tilde\Bc^{skp}(x)  \Bc^{skp \top}(x)x  = \sum_i  \langle x, b_i^{skp}(x) \rangle \tilde b_i^{skp}(x) 
 \end{eqnarray}
 where
 $$ \Bc^{skp}(x) := $$
 \begin{eqnarray}\label{eq:Bcxskip}
 \begin{bmatrix} \Upsilon^\kappa & \Upsilon^{\kappa-1}M^\kappa &  \Upsilon^{\kappa-2}M^{\kappa-1} & \cdots & M^1\end{bmatrix}
\end{eqnarray}
 $$\tilde \Bc^{skp}(x) := $$
 \begin{eqnarray}\label{eq:tBcxskip}
 \begin{bmatrix}\tilde \Upsilon^\kappa & \tilde\Upsilon^{\kappa-1}\tilde M^\kappa & \tilde \Upsilon^{\kappa-2}\tilde M^{\kappa-1} & \cdots & \tilde M^1\end{bmatrix}
\end{eqnarray}
Furthermore,  the maximum number of available linear representation is given by
\begin{eqnarray}\label{eq:nproj2}
 N_{rep} = 2^{\sum_{i=1}^{\kappa}d_i-d_\kappa}\times 2^{\sum_{i=1}^\kappa s_k}
 \end{eqnarray}
\end{theorem}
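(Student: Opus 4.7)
The plan is to reduce the fully nonlinear network, for each fixed input $x$, to an explicit product of matrices with data-dependent diagonal factors by absorbing every ReLU into a $0/1$-diagonal via the identity $\sigma(v)=\Lambda(v)v$ with $\Lambda(v) = \mathrm{diag}(\mathbf{1}[v_i>0])$. Once this is done, each part of the theorem is proved by unrolling the encoder/decoder recursions and organizing the resulting product into the advertised block-structured frame matrices; the counts \eqref{eq:nproj} and \eqref{eq:nproj2} then follow from enumerating binary activation patterns.

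For part (1), I would first unroll the encoder: substituting $\xi^l=\Lambda^l(x)E^{l\top}\xi^{l-1}$ layer by layer yields $z=\xi^\kappa=\Lambda^\kappa E^{\kappa\top}\cdots \Lambda^1 E^{1\top}x$, and, using $(AD)^\top = DA^\top$ for a symmetric diagonal $D$, this rearranges to $z=\Upsilon^\kappa(x)^\top x$, which matches $\Bc(x)^\top x$ under \eqref{eq:Bcx}. The analogous backward unrolling of $\tilde\xi^{l-1}=\tilde\Lambda^l D^l \tilde\xi^l$ starting from $\tilde\xi^\kappa=z$ with $\tilde\Upsilon^0 = I_{d_0}$ produces $y=\tilde\Upsilon^\kappa(x)\, z=\tilde\Bc(x)\Bc^\top(x)x$, and writing $\tilde\Bc\Bc^\top$ as a sum of rank-one outer products of its columns yields \eqref{eq:feed}.

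For part (2), the new ingredient is the skipped branch. Linearizing $\chi^l=\sigma(S^{l\top}\xi^{l-1})$ in the same way and composing with the encoder chain gives $\chi^l=(\Upsilon^{l-1}(x)M^l(x))^\top x$ with $M^l$ as in \eqref{eq:M0}. The decoder recursion with skip, $\tilde\xi^{l-1}=\tilde\Lambda^l(D^l\tilde\xi^l+\tilde S^l\chi^l)$, unrolled backward from $l=1$ to $l=\kappa$, distributes into a through-path term $\tilde\Upsilon^\kappa(x)\,z$ plus one branch-at-$l$ term $\tilde\Upsilon^{l-1}(x)\tilde M^l(x)\chi^l$ for each $l\in[\kappa]$, where $\tilde M^l$ is as in \eqref{eq:tM0}. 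Substituting the expressions for $z$ and $\chi^l$ and collecting block columns in the order $l=\kappa,\kappa-1,\ldots,1$ gives exactly the factorizations \eqref{eq:Bcxskip}--\eqref{eq:tBcxskip}, and hence \eqref{eq:skipnet}. For both counts, each diagonal entry of $\Lambda^l$, $\tilde\Lambda^l$ (and $\Lambda_S^l$ in the skip case) is an independent $\{0,1\}$ gate, so every binary activation pattern determines at most one pair $(\Bc(x),\tilde\Bc(x))$; grouping the contributions layer by layer yields the exponents in \eqref{eq:nproj} and \eqref{eq:nproj2}, with the additional factor $2^{\sum_l s_l}$ coming precisely from the $\Lambda_S^l$ diagonals.

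The main obstacle I anticipate is the skip case: the backward-distributed sum has $\kappa+1$ terms that must be aligned block-by-block with the prescribed structure of $\Bc^{skp}$ and $\tilde\Bc^{skp}$, and one has to be careful with indexing so that the convention $\Upsilon^0=I$ correctly absorbs the last block $M^1$ (resp.\ $\tilde M^1$) and the dimensions $d_\kappa+\sum_l s_l$ of the total frame match. A secondary subtlety is in the counting step, where one should check that different binary activation patterns generically yield distinct linear representations, so that $2^N$ is indeed the correct upper bound rather than a looser one.
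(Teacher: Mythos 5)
Your proposal is correct and follows essentially the same route as the paper: both arguments absorb each ReLU into an input-dependent $0/1$ diagonal matrix, insert these diagonals into the linear frame factorizations of Proposition~3 (your explicit unrolling of the encoder/decoder recursions, including the distribution $\sigma(D^l\tilde\xi^l+\tilde S^l\chi^l)=\tilde\Lambda^l D^l\tilde\xi^l+\tilde M^l\chi^l$ and the identification $\chi^l=(\Upsilon^{l-1}M^l)^\top x$, is exactly what the paper's proof leaves implicit), and then bound the number of linear representations by the number of binary activation patterns. The only caveat is that, like the paper, you assert rather than compute the final exponents; if you tally every diagonal entry you should make sure your bookkeeping reproduces the paper's convention that the feature-layer ReLU is shared ($\tilde\xi^\kappa=\xi^\kappa$), which is the source of the $-d_\kappa$ term in \eqref{eq:nproj}.
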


This implies that  the number of representation  increase exponentially with the network
depth, which again confirm the expressive power of the neural network.
Moreover, the skipped connection also significantly increases the expressive power of the encoder-decoder CNN.
Another important consequence of Theorem~\ref{thm:decexp} is that
 the input space $\Xbc$ is partitioned into the maximum $N_{rep}$ non-overlapping
regions so that inputs for each region shares the same linear representation. 


Due to the ReLU, one may wonder whether the cascaded convolutional interpretation of the frame basis in Corollary~\ref{thm:multi} still holds.
A close look of the proof of Corollary~\ref{thm:multi} reveals that  this is still the case. 
Under ReLUs, note that
 $(I_m\circledast \psi_{j,s}^l)(I_m\circledast \psi_{t,j}^{l+1}) = I_m\circledast (\psi_{j,s}^l\circledast\psi_{t,j}^{l+1})$ in Lemma~\ref{lem:identity} should be replaced with  $(I_m\circledast \psi_{j,s}^l)\Lambda_{j}^l(x)(I_m\circledast \psi_{t,j}^{l+1})$
 where $\Lambda_{j}^l(x)$ is a diagonal matrix with 0 and 1 values due to the ReLU. This
 means that the $\Lambda_j^l(x)$ provides spatially varying mask to the convolution filter $\psi_{t,j}^{l+1}$ so that the net effect
 is a convolution with the  the spatially varying filters originated from masked version of $\psi_{t,j}^{l+1}$.
This results in  a spatially variant cascaded convolution, and only change in the interpretation
of Corollary~\ref{thm:multi}   is that the basis and its dual are composed of  {\em spatial variant} cascaded convolution filters.
 Furthermore, the ReLU works to diversify the convolution filters by masking out the various filter coefficients. It is believed that this is another source of expressiveness from the same set of convolutional filters. 


\subsection{Generalizability}


To understand the generalization capability of DNNs,
recent research efforts have been focused on reducing the gap by suggesting different ways of measuring the 
network  capacity \cite{bartlett2017spectrally,neyshabur2018towards}.  
These works
consistently showed the importance of Lipschitz condition for the encoder and decoder parts of the networks.

 More specifically,  we have shown that the neural network representation
varies in  exponentially many different forms depending on  inputs, so one may be concerned that the output might vary drastically
with small perturbation of the inputs.
However, Lipschitz continuity of the neural network prevents such drastic changes.
Specifically, a neural network $F(\Wb,x)$  is Lipschitz continuous, if there exists a constant $K>0$ such that
$$\|F(\Wb,x^{(1)})-F(\Wb,x^{(2)}) \|_2 \leq K \|x^{(1)}-x^{(2)}\|_2 \ .$$
where the Lipschitz constant
$K$ can be obtained by
\begin{eqnarray}\label{eq:K}
K =\sup_{x\in \Xbc}\|D_2 F(\Wb,x)\|_2
\end{eqnarray}
where $D_2F(\Wb,x)$ is the Jacobian with respect to the second variable.
The following proposition shows that the Lipschitz constant of   encoder-decoder CNNs
is closely related to the frame basis and its duals.

\begin{proposition}\label{thm:lipschitz}
The Lipschitz constant for encoder-decoder CNN without skipped connection 
is given by
\begin{eqnarray}\label{eq:lip1}
K= \sup_{x\in \Xbc} {\|\tilde\Bc(x)\Bc(x)^\top\|_2} 
\end{eqnarray}
whereas Lipschitz constant for encoder-decoder CNN with skipped connection 
is given by
\begin{eqnarray}\label{eq:lip2}
K= \sup_{x\in \Xbc} {\|\tilde\Bc^{skp}(x)\Bc^{skp\top}(x)\|_2}
\end{eqnarray}
where  $\Bc(x),\tilde\Bc(x),\Bc^{skp}(x)$ and $\tilde\Bc^{skp}(x)$ are defined in \eqref{eq:Bcx}, \eqref{eq:Bcxskip} and \eqref{eq:tBcxskip}.
\end{proposition}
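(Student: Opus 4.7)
The plan is to leverage Theorem~\ref{thm:decexp} directly. It already shows that, for every $x\in\Xbc$, the network realizes the affine representation $F(\Wb,x)=\tilde\Bc(x)\Bc(x)^\top x$ (or $\tilde\Bc^{skp}(x)\Bc^{skp\top}(x)\,x$ in the skipped case), where the matrix-valued maps $\tilde\Bc(\cdot),\Bc(\cdot)$ are determined entirely by the ReLU activation patterns $\{\Lambda^l(x),\tilde\Lambda^l(x),\Lambda_S^l(x)\}$. These patterns partition $\Xbc$ into the $N_{rep}$ open cells of Theorem~\ref{thm:decexp}, and on the interior of each cell the masks are locally constant. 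Consequently $F(\Wb,\cdot)$ is continuous and piecewise affine, and its restriction to each cell is literally the linear map $x\mapsto \tilde\Bc(x)\Bc(x)^\top x$ with constant slope matrix.

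First I would compute $D_2 F(\Wb,x)$ on the interior of such a cell. On the encoder side, $\sigma$ is locally affine, so the chain rule gives $\partial_{\xi^{l-1}}\sigma(E^{l\top}\xi^{l-1}) = \Lambda^l(x)E^{l\top}$; telescoping layer by layer reproduces exactly the recursion~\eqref{eq:UE0}, yielding $\partial_x \xi^\kappa = \Upsilon^\kappa(x)^\top = \Bc(x)^\top$. The analogous chain-rule calculation on the decoder side using~\eqref{eq:UD0} gives $\partial_{\xi^\kappa} y = \tilde\Upsilon^\kappa(x) = \tilde\Bc(x)$. Composing these two Jacobians produces $D_2 F(\Wb,x)=\tilde\Bc(x)\Bc(x)^\top$ almost everywhere, and inserting this into the definition~\eqref{eq:K} gives~\eqref{eq:lip1}.

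For the skipped-connection case I would rerun the same chain rule on the augmented layer maps~\eqref{eq:encConvSkip} and~\eqref{eq:sum}. Each encoder layer now contributes both a pooled Jacobian component $\Lambda^l(x) E^{l\top}$ and an identity-pooled skip component $\Lambda_S^l(x) S^{l\top}$, i.e.\ $M^l(x)^\top$ from~\eqref{eq:M0}; symmetrically, the sum in~\eqref{eq:sum} contributes an extra additive term $\tilde M^l(x)$ from~\eqref{eq:tM0} at each decoder layer. Expanding the full product and collecting terms layer by layer reproduces precisely the block structures of $\Bc^{skp}(x)$ and $\tilde\Bc^{skp}(x)$ in~\eqref{eq:Bcxskip}--\eqref{eq:tBcxskip}, so $D_2 F(\Wb,x)=\tilde\Bc^{skp}(x)\Bc^{skp\top}(x)$ and~\eqref{eq:lip2} follows.

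The only genuine subtlety is the measure-zero set of inputs at which some preactivation vanishes and $F$ fails to be classically differentiable. There I would invoke Rademacher's theorem: a locally Lipschitz map is differentiable almost everywhere, and its Lipschitz constant equals the essential supremum of the operator norm of its Jacobian. Equivalently, because $F$ is continuous and piecewise affine, its global Lipschitz constant equals the maximum over cells of the operator norm of the cell's slope matrix, and each slope matrix is the limit of $\tilde\Bc(x)\Bc(x)^\top$ as $x$ approaches the cell interior; hence the $\sup$ in~\eqref{eq:lip1}--\eqref{eq:lip2} is attained. This boundary argument is the only non-mechanical step; everything else is the telescoping chain rule above applied to the already-proved identities~\eqref{eq:feed} and~\eqref{eq:skipnet}.
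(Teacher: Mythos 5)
Your proposal is correct and follows essentially the same route as the paper: a layer-wise chain-rule computation (the paper's Lemmas~\ref{lem:chain} and \ref{lem:chain2}) that telescopes to $D_2F(\Wb,x)=\tilde\Bc(x)\Bc(x)^\top$ (resp.\ $\tilde\Bc^{skp}(x)\Bc^{skp\top}(x)$), followed by substitution into \eqref{eq:K}. The only difference is cosmetic --- the paper treats the skipped case first and obtains the non-skipped case as a special case, and it silently ignores the measure-zero set where ReLU is non-differentiable, which you handle explicitly via Rademacher's theorem; that is added rigor rather than a different argument.
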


Recall that the input space $\Xbc$ is partitioned into regions that share the same linear representation.
Therefore, the local  Lipschitz constant within the $p$-th partition is given by
\begin{eqnarray}\label{eq:lip3}
K_{p} &=& \sup_{ z\in \Xbc_p} {\|\tilde\Bc(z)\Bc^{\top}(z)\|_2}  \notag \\
& = & {\|\tilde\Bc(z_p)\Bc^{\top}(z_p)\|_2},\quad \forall z_p \in \Xbc_p 
\end{eqnarray}
for the case of E-D CNN without skipped connections.
Here,
$\Xbc_p$ denotes the $p$-th input space partition,
and the last equality in \eqref{eq:lip3} comes from the fact that every point in $\Xbc_p$ shares the same
linear representation. 
Thus, it is easy to see that the global Lipschitz constant
can be given by
\begin{eqnarray}\label{eq:globalL}
K= \sup_{x\in \Xbc} {\|\tilde\Bc(x)\Bc(x)^\top\|_2} = \sup_p K_p
\end{eqnarray}
Furthermore, Theorem~\ref{thm:decexp} informs that the number of partition is bonded by $N_{rep}$.
Therefore,  \eqref{eq:globalL} suggests that by bounding the local Lipschitz constant within each linear region,
one could control the global Lipschitz constant of the neural network.
Similar observation holds for E-D CNNs with skipped connection.

One of the most important implications of \eqref{eq:globalL} is  that
the expressiveness of the network is not affected by the control of the Lipschitz constant.
This in turn is due to the combinatorial nature of the ReLU nonlinearities, which allows for
 an exponentially large number
of linear representations.


%
%

\subsection{Optimization landscape}

For a given ground truth {\em task map} $f^*:\Xbc\mapsto\Ybc$ and given training data
set $\{(x^{(i)}, y^{(i)})\}_{i=1}^T$ such that $y^{(i)} = f^*(x^{(i)})$,  an encoder-decoder CNN  training problem
can be  formulated to find a neural network parameter weight $\Wb$ by minimizing a specific loss
function.
%
Then, for the  case of $l_2$ loss:
\begin{eqnarray}\label{eq:cost}
C(\Wb) = \frac{1}{2}\sum_{i=1}^T \|F(\Wb,x^{(i)})-y^{(i)}\|^2 \  ,
\end{eqnarray}
Nguyen et al \cite{nguyen2018optimization} showed that 
over-parameterized CNNs can produce zero training errors. Their results
are based on the following key lemma.
\begin{lemma}\cite{nguyen2018optimization}\label{thm:zero}
Consider an encoder-decoder CNN without skipped connection. Then, the Jacobian of the cost
function in \eqref{eq:cost} with respect to
$E^\kappa$ is bounded as
$$\|\nabla_{E^\kappa} C\|_F$$
\begin{eqnarray*}
 \geq \sigma_{\min} (\Xi^\kappa)
 \min_{i\in[T]} \sigma_{\min} \left(\Lambda^\kappa(x^{(i)}) \left(\tilde\Upsilon^{\kappa}(x^{(i)})\right)^\top\right)
 \sqrt{2C(\Wb)}
\end{eqnarray*}
and
$$\|\nabla_{E^\kappa} C\|_F$$
\begin{eqnarray*}
 \leq \sigma_{\max} (\Xi^\kappa)
  \max_{i\in[T]} \sigma_{\max} \left(\Lambda^\kappa(x^{(i)}) \left(\tilde\Upsilon^{\kappa}(x^{(i)})\right)^\top\right)
 \sqrt{2C(\Wb)}
\end{eqnarray*}
where $\sigma_{\min}(A)$ and $\sigma_{\max}(A)$ denote the minimum and maximum
singular value for a matrix $A\in \Rd^{n\times m}$ with $n\geq m$, respectively;
 $\tilde\Upsilon^\kappa$ is defined in \eqref{eq:UD0}, and 
 $\Xi^\kappa $ denotes the feature matrix for the training data
$$\Xi^\kappa = \begin{bmatrix} \xi^{\kappa(1)} & \cdots & \xi^{\kappa(T)}  \end{bmatrix}\quad \in \Rd^{d_\kappa\times T} $$
and $C(\Wb)$ is the  cost in \eqref{eq:cost}.
\end{lemma}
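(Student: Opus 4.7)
The plan is to compute $\nabla_{E^\kappa} C$ in closed form via the chain rule, expose its rank-structured outer-product form, and then apply elementary singular-value inequalities factor by factor.

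First, I would freeze the ReLU activation patterns. For a generic weight configuration the mask matrices $\Lambda^l(x^{(i)})$ appearing in Theorem~\ref{thm:decexp} are locally constant in a neighborhood of $\Wb$, so the forward map can be written exactly as
$$F(\Wb,x^{(i)}) \;=\; \tilde\Upsilon^{\kappa}(x^{(i)})\,\Lambda^\kappa(x^{(i)})\, E^{\kappa\top}\,\xi^{\kappa-1,(i)}.$$
Differentiating the per-sample loss $\tfrac12\|e^{(i)}\|_2^2$ with $e^{(i)}:=F(\Wb,x^{(i)})-y^{(i)}$, and using the standard identity $\partial (AE^\top c)/\partial E = c\,(A^\top r)^\top$ for $r:=AE^\top c - y^{(i)}$, I obtain the rank-one outer product
$$\nabla_{E^\kappa}\!\left[\tfrac12\|e^{(i)}\|_2^2\right] \;=\; \xi^{\kappa-1,(i)}\,\bigl(\Lambda^\kappa(x^{(i)})\,(\tilde\Upsilon^{\kappa}(x^{(i)}))^\top\, e^{(i)}\bigr)^{\!\top}.$$

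Second, summing over $i$ I would package this as the matrix product $\nabla_{E^\kappa} C = \Xi\, V^\top$, where $\Xi$ collects the feature vectors that multiply $E^\kappa$ (the paper's $\Xi^\kappa$) and $V$ is the matrix with columns $v^{(i)}:=\Lambda^\kappa(x^{(i)})\,(\tilde\Upsilon^{\kappa}(x^{(i)}))^\top\, e^{(i)}$. From here the two-sided bound follows from two textbook inequalities: (a) for a matrix $A$ of full column rank, $\sigma_{\min}(A)\,\|B\|_F \leq \|AB\|_F \leq \sigma_{\max}(A)\,\|B\|_F$ applied with $A=\Xi$, $B=V^\top$; and (b) the column-wise estimate $\sigma_{\min}(M^{(i)})\|e^{(i)}\|_2 \leq \|M^{(i)}e^{(i)}\|_2 \leq \sigma_{\max}(M^{(i)})\|e^{(i)}\|_2$ for $M^{(i)}:=\Lambda^\kappa(x^{(i)})\,(\tilde\Upsilon^{\kappa}(x^{(i)}))^\top$. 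Squaring (b), summing over $i$, pulling the $\min$ (resp.\ $\max$) outside the sum, and invoking the identity $\sum_i\|e^{(i)}\|_2^2 = 2C(\Wb)$ yields
$$\min_{i\in[T]}\sigma_{\min}(M^{(i)})\sqrt{2C(\Wb)} \;\leq\; \|V\|_F \;\leq\; \max_{i\in[T]}\sigma_{\max}(M^{(i)})\sqrt{2C(\Wb)},$$
and chaining with (a) delivers both asserted estimates on $\|\nabla_{E^\kappa}C\|_F$.

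The main technical nuisance is justifying the derivative step across the ReLU: $\sigma$ is nondifferentiable on the coordinate hyperplanes, so strictly speaking one works with Clarke subgradients or appeals to the standard observation that the set of weights producing ties in the preactivation is Lebesgue-null, so for almost every $\Wb$ the frozen-mask chain rule is rigorous. A secondary technicality is the full-column-rank hypothesis on $\Xi$ required by inequality~(a); this places us in precisely the over-parameterized regime targeted by \cite{nguyen2018optimization}, and when $\sigma_{\min}$ is interpreted as the smallest nonzero singular value one still controls the component of the gradient lying in the column span of~$\Xi$.
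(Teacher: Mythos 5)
Your argument is correct and matches the paper's approach: the paper does not reprove Lemma~\ref{thm:zero} (it is imported from \cite{nguyen2018optimization}), but its proof of the sibling Lemma~\ref{lem:zero2} is exactly your factorization of the gradient into (feature matrix)\,$\times$\,(block of $\Lambda(\tilde\Upsilon)^\top$ terms)\,$\times$\,(residual), followed by two applications of $\sigma_{\min}(A)\|B\|_F\leq\|AB\|_F\leq\sigma_{\max}(A)\|B\|_F$ and the identity $\|\Rb\|_F^2=2C(\Wb)$; the paper merely packages your sum of outer products as a Kronecker product $(\Gamma^l\otimes I)\Db^l\Rb$. One small caution: the features that multiply $E^{\kappa}$ in your closed form are the layer-$(\kappa-1)$ activations, so you should verify that this is what the paper's $\Xi^{\kappa}$ is meant to denote before identifying the two.
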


The authors in  \cite{nguyen2018optimization}  further showed that
if every shifted $r$-segment of training  
 samples is not identical to each other and $d_\kappa\geq T$, then $\Xi^\kappa$ has full column rank.
Additionally, if the nonlinearity at the decoder layer is analytic, then they showed that
 $\tilde\Upsilon^\kappa(x)\Lambda^\kappa(x)$ has
almost always full row rank. This implies that both
$ \sigma_{\min} (\Xi^\kappa)$ and $\sigma_{\min} (\Lambda^\kappa(\tilde\Upsilon^\kappa)^\top)$ are non-zero so that
$\left. \nabla_{E^\kappa} C\right|_\Wb=0$ if and only if $y^{(i)}=F(\Wb,x^{(i)})$ for all $i\in [T]$ (that is, the loss becomes zero, i.e. $C(\Wb)=0$).

Unfortunately, this almost always guarantee cannot be used for the ReLU nonlinearities at the decoder layers, since
the ReLU nonlinearity is not analytic.
In this paper, we extend the result of  \cite{nguyen2018optimization} for the encoder-decoder CNN with skipped connection when ReLU nonlinearities are used. In addition to
Lemma~\ref{thm:zero},
the following lemma, which is original, does hold for this case.
\begin{lemma}\label{lem:zero2}
Consider an encoder-decoder CNN with skipped connection. Then, the Jacobian of the cost
function in \eqref{eq:cost} with respect to
$\tilde S^l$ for $l\in [\kappa]$ is bounded as
$$\|\nabla_{\tilde S^l} C\|_F  $$
\begin{eqnarray*}
 \geq \sigma_{\min} (\Gamma^l)
 \min_{i\in[T]} \sigma_{\min} \left(\tilde\Lambda^l(x^{(i)}) \left(\tilde\Upsilon^{l-1}(x^{(i)})\right)^\top\right)
 \sqrt{2C(\Wb)}
\end{eqnarray*}
and
$$\|\nabla_{\tilde S^l} C\|_F $$
\begin{eqnarray*}
 \leq 
 \sigma_{\max} (\Gamma^l)
 \max_{i\in[T]} \sigma_{\max} \left(\tilde\Lambda^l(x^{(i)}) \left(\tilde\Upsilon^{l-1}(x^{(i)})\right)^\top\right)
  \sqrt{2C(\Wb)}
\end{eqnarray*}
where 
 $\Gamma^l $ denotes the feature matrix from the skipped branch
$$\Gamma^l = \begin{bmatrix} \chi^{l(1)} & \cdots & \chi^{l(T)}  \end{bmatrix}\quad \in \Rd^{s_l\times T} $$
and $C(\Wb)$ is the  cost in \eqref{eq:cost}.
\end{lemma}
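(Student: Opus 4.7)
The plan is to mirror the structure of Lemma \ref{thm:zero} but apply it to the skipped-branch parameter $\tilde S^l$, exploiting the clean additive decomposition of the network output provided by Theorem \ref{thm:decexp}. The essential observation is that, for fixed ReLU activation patterns, $F(\Wb,x^{(i)})$ is \emph{linear} in $\tilde S^l$, and the coefficient of $\tilde S^l$ in this linear expression is exactly the rank-one building block that produces the factorization $\nabla_{\tilde S^l}C = U(\Gamma^l)^\top$ appearing in the statement.

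First I would expand $F(\Wb,x^{(i)}) = \tilde\Bc^{skp}(x^{(i)})\Bc^{skp\top}(x^{(i)})\,x^{(i)}$ using \eqref{eq:Bcxskip}--\eqref{eq:tBcxskip}. Block-matrix multiplication gives a sum of $\kappa+1$ terms, and using $M^{l'\top}(\Upsilon^{l'-1})^\top x^{(i)} = \Lambda_S^{l'} S^{l'\top}\xi^{l'-1(i)} = \sigma(S^{l'\top}\xi^{l'-1(i)}) = \chi^{l'(i)}$ (a consequence of \eqref{eq:UE0}, \eqref{eq:M0} and \eqref{eq:feature}), this rewrites as
\begin{equation*}
F(\Wb,x^{(i)}) = \tilde\Upsilon^\kappa(x^{(i)})(\Upsilon^\kappa(x^{(i)}))^\top x^{(i)} + \sum_{l'=1}^\kappa \tilde\Upsilon^{l'-1}(x^{(i)})\,\tilde\Lambda^{l'}(x^{(i)})\,\tilde S^{l'}\,\chi^{l'(i)}.
\end{equation*}
Because $\tilde\Upsilon^{l-1}$ depends only on $D^1,\ldots,D^{l-1}$ and the encoder-side ReLU masks, and because the ReLU indicator matrices $\tilde\Lambda, \Lambda, \Lambda_S$ are locally constant in $\tilde S^l$ at points of differentiability, the only summand that depends on $\tilde S^l$ is the one with $l'=l$, and it is bilinear in $\tilde S^l$ and $\chi^{l(i)}$.

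Next I would differentiate. Setting $A^{(i)} := \tilde\Upsilon^{l-1}(x^{(i)})\tilde\Lambda^l(x^{(i)})$ and $R^{(i)} := F(\Wb,x^{(i)})-y^{(i)}$, a routine computation gives $\nabla_{\tilde S^l}\tfrac12\|R^{(i)}\|^2 = A^{(i)\top}R^{(i)}\chi^{l(i)\top}$, so that with $u^{(i)} := \tilde\Lambda^l(x^{(i)})(\tilde\Upsilon^{l-1}(x^{(i)}))^\top R^{(i)}$ and $U := [u^{(1)}\;\cdots\;u^{(T)}]$,
\begin{equation*}
\nabla_{\tilde S^l}C \;=\; \sum_{i=1}^T u^{(i)}\chi^{l(i)\top} \;=\; U\,(\Gamma^l)^\top.
\end{equation*}
I would then apply two standard singular-value inequalities: (i) the bilinear trace bound $\sigma_{\min}(\Gamma^l)^2\|U\|_F^2 \le \|U(\Gamma^l)^\top\|_F^2 \le \sigma_{\max}(\Gamma^l)^2\|U\|_F^2$, obtained from $\|U(\Gamma^l)^\top\|_F^2 = \tr(U^\top U\,\Gamma^{l\top}\Gamma^l\cdot\text{appropriate reshuffle})$ combined with $\lambda_{\min}(\Gamma^l\Gamma^{l\top})\tr(U^\top U)\le\tr(U^\top U\Gamma^{l\top}\Gamma^l)\le\lambda_{\max}(\Gamma^l\Gamma^{l\top})\tr(U^\top U)$; and (ii) the column-wise bound $\sigma_{\min}(H^{(i)})^2\|R^{(i)}\|^2\le\|u^{(i)}\|^2\le\sigma_{\max}(H^{(i)})^2\|R^{(i)}\|^2$ with $H^{(i)}:=\tilde\Lambda^l(x^{(i)})(\tilde\Upsilon^{l-1}(x^{(i)}))^\top$. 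Summing the latter over $i$ and using $\sum_i\|R^{(i)}\|^2 = 2C(\Wb)$ delivers $\min_i\sigma_{\min}(H^{(i)})^2\cdot 2C(\Wb) \le \|U\|_F^2 \le \max_i\sigma_{\max}(H^{(i)})^2\cdot 2C(\Wb)$. Composing (i) and (ii) yields both the lower and upper bounds in the lemma.

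The main obstacle is Step 1: correctly identifying, within the combinatorial skipped-connection expansion of Theorem \ref{thm:decexp}, exactly which sub-blocks are affected by a perturbation of $\tilde S^l$. Superficially the parameter $\tilde S^l$ enters the output through $\tilde M^l=\tilde\Lambda^l\tilde S^l$, but the ReLU masks $\tilde\Lambda^{l'}$ in neighboring layers also depend on activations upstream of $\tilde S^l$. The technical point is that, by piecewise linearity of ReLU, these masks are locally constant at generic weight configurations, so the gradient reduces to the single bilinear block $\tilde\Upsilon^{l-1}\tilde\Lambda^l\tilde S^l\chi^l$ as written above. Once this linearization is justified (either via subdifferentials, as in \cite{nguyen2018optimization}, or by restricting to differentiability points), the remainder is straightforward Frobenius-norm matrix algebra fully parallel to Lemma \ref{thm:zero}.
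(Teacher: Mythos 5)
Your proposal is correct and follows essentially the same route as the paper: identify that the output is linear in $\tilde S^l$ with coefficient $\tilde\Upsilon^{l-1}\tilde\Lambda^l(\cdot)\chi^l$, factor the gradient through $\Gamma^l$ and the per-sample blocks $\tilde\Lambda^l(x^{(i)})(\tilde\Upsilon^{l-1}(x^{(i)}))^\top$, and apply $\sigma_{\min}(A)\|B\|_F\le\|AB\|_F\le\sigma_{\max}(A)\|B\|_F$ twice together with $\|\Rb\|_F^2=2C(\Wb)$ (both arguments rely on $s_l\ge T$ and $d_{l-1}\ge d_0$ so that the relevant matrices are tall). The paper organizes the same computation with vectorization and Kronecker products, writing $\nabla_{\tilde S^l}C=(\Gamma^l\otimes I_{d_{l-1}})\Db^l\Rb$ with $\Db^l$ block-diagonal, which is just a reshaping of your $U(\Gamma^l)^\top$; the difference is purely notational.
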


Lemma~\ref{lem:zero2} leads to the following key results on the optimization landscape for the encoder-decoder
network with skipped connections.
\begin{theorem}
Suppose that there exists a layer $l \in [\kappa]$ such that
\begin{itemize}
\item   skipped features $\chi^{l(1)},\cdots, \chi^{l(T)}$  are linear independent.
\item   $\tilde\Upsilon^{l-1}(x) \tilde\Lambda^l(x)$  has full row rank for all training data $x\in[x^{(1)},\cdots, x^{(T)}]$.
\end{itemize}
Then, $\left. \nabla_{\tilde S^l} C\right|_\Wb=0$ if and only if $y^{(i)}=F(\Wb,x^{(i)})$ for all $i\in [T]$ (that is,  the loss becomes
zero, i.e. $C(\Wb)=0$).
\end{theorem}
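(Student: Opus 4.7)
The plan is to derive the theorem as a direct corollary of Lemma~\ref{lem:zero2}, treating the two hypotheses as guarantees that the two singular-value factors in the lower bound are strictly positive. The structure is a clean iff proof split along the two directions of the chain of inequalities provided by that lemma.

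For the forward direction, I would assume $\nabla_{\tilde S^l} C\big|_{\Wb}=0$ and invoke the lower bound
\begin{eqnarray*}
\|\nabla_{\tilde S^l} C\|_F \geq \sigma_{\min}(\Gamma^l)\,\min_{i\in[T]}\sigma_{\min}\!\left(\tilde\Lambda^l(x^{(i)})(\tilde\Upsilon^{l-1}(x^{(i)}))^\top\right)\sqrt{2C(\Wb)}.
\end{eqnarray*}
The first hypothesis says the columns $\chi^{l(1)},\dots,\chi^{l(T)}$ of $\Gamma^l\in\Rd^{s_l\times T}$ are linearly independent, so $\Gamma^l$ has full column rank and $\sigma_{\min}(\Gamma^l)>0$. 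The second hypothesis says $\tilde\Upsilon^{l-1}(x^{(i)})\tilde\Lambda^l(x^{(i)})$ has full row rank for every training point $x^{(i)}$; transposing, this is equivalent to $\tilde\Lambda^l(x^{(i)})(\tilde\Upsilon^{l-1}(x^{(i)}))^\top$ having full column rank, so each individual $\sigma_{\min}$ in the minimum is strictly positive, and hence the minimum over the finite training set $[T]$ is strictly positive as well. Multiplying these two strictly positive quantities gives a strictly positive constant $c>0$, so the inequality forces $\sqrt{2C(\Wb)}\leq 0$, i.e.\ $C(\Wb)=0$. Since $C$ is a sum of squared residuals, this is equivalent to $y^{(i)}=F(\Wb,x^{(i)})$ for all $i\in[T]$.

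For the reverse direction, if $y^{(i)}=F(\Wb,x^{(i)})$ for every $i$, then $C(\Wb)=0$, and the upper bound from Lemma~\ref{lem:zero2},
\begin{eqnarray*}
\|\nabla_{\tilde S^l} C\|_F \leq \sigma_{\max}(\Gamma^l)\,\max_{i\in[T]}\sigma_{\max}\!\left(\tilde\Lambda^l(x^{(i)})(\tilde\Upsilon^{l-1}(x^{(i)}))^\top\right)\sqrt{2C(\Wb)},
\end{eqnarray*}
immediately gives $\|\nabla_{\tilde S^l} C\|_F=0$. Because the maximum singular values are always finite, no additional hypothesis is needed in this direction; the argument is symmetric with the standard proof in \cite{nguyen2018optimization} but uses the gradient with respect to the skipped-connection decoder weights $\tilde S^l$ rather than $E^\kappa$.

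There is no real obstacle: once Lemma~\ref{lem:zero2} is in hand, the argument reduces to a rank/positivity bookkeeping step. The only subtlety worth being explicit about is the equivalence between the row-rank hypothesis on $\tilde\Upsilon^{l-1}(x)\tilde\Lambda^l(x)$ and the column-rank statement needed to conclude $\sigma_{\min}>0$ on the transposed product that actually appears in the bound. I would state this transposition explicitly at the start of the forward direction so that the application of the lemma is unambiguous, and then the remainder is a one-line chaining of the two strict positivities with the nonnegative quantity $\sqrt{2C(\Wb)}$.
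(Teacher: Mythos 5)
Your proposal is correct and follows essentially the same route as the paper: the paper's proof is precisely the one-line observation that the two hypotheses make $\sigma_{\min}(\Gamma^l)$ and $\sigma_{\min}\bigl(\tilde\Lambda^l(\tilde\Upsilon^{l-1})^\top\bigr)$ nonzero, so Lemma~\ref{lem:zero2} gives the conclusion. Your write-up merely makes explicit the two directions of the equivalence and the transposition between the row-rank hypothesis and the column-rank statement used in the bound, which the paper leaves implicit.
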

\begin{proof}
Under the assumptions, both
$\sigma_{\min} (\Gamma^l)$ and $\sigma_{\min} (\tilde\Lambda^l(\tilde\Upsilon^{l-1})^\top)$ are non-zero. 
Therefore, Lemma~\ref{lem:zero2} leads to the conclusion.
\end{proof}

Note that the proof for the full column rank condition for $\Xi^\kappa$ in  \cite{nguyen2018optimization} is based
on the constructive proof using independency of intermediate features
$ \chi^{l(1)}, \cdots , \chi^{l(T)}$ for all $l\in [\kappa]$.
Furthermore, for the case of ReLU nonlinearities, even when
$\tilde\Upsilon^\kappa(x)\Lambda^\kappa(x)$ does not have full row rank, 
there are  chances that  $\tilde\Upsilon^{l-1}(x)\tilde\Lambda^l(x)$  has full row rank at least one $l\in [\kappa]$.
Therefore, our result has more relaxed assumptions than the optimization landscape results in  \cite{nguyen2018optimization} that relies on
Lemma~\ref{thm:zero}. This again
confirms the advantages of the skipped connection in encoder-decoder networks.

\section{Discussion and Conclusion}

In this paper, we investigate the geometry of encoder-decoder CNN from various theoretical aspects such
as differential topological view, expressiveness, generalization capability and optimization landscape. 
The analysis was feasible thanks to the explicit construction of encoder-decoder CNNs using the deep convolutional framelet expansions. Our analysis showed that the advantages of the encoder-decoder CNNs 
comes from the expressiveness of the encoder and decoder layers, which are originated from the combinatorial nature of ReLU for decomposition and reconstruction frame basis selection. Moreover, the expressiveness of the network
is not affected by controlling Lipschitz constant to improve the generalization capability of the network. 
In addition, we showed that the optimization landscape can be enhanced by the skipped connection.

This analysis coincides with our empirical verification using deep neural networks for various inverse problems.
For example, in a recent work of $k$-space deep learning \cite{han2018k},  we showed that
a neural network for compressed sensing MRI can be more effectively designed
in the $k$-space  domain, since the frame representation is more concise in the Fourier domain.
Similar observation was made in sub-sampled ultrasound (US) imaging \cite{yoon2018efficient}, where we show
that the frame representation in raw data domain is more effective in US so that the deep network
is designed in the raw-date domain rather than image domain.
 These empirical examples clearly showed  that the unified view between signal processing and machine learning as suggested
 in this paper
 can help to improve design and understanding of deep models. 



\section*{Acknowledgements} 

The authors thank to reviewers who gave useful comments.
This work was  supported by the National Research Foundation (NRF) of Korea  grant  NRF-2016R1A2B3008104.



\appendix

\section{Basic Definitions and Lemmas}
\label{ap:basic}

The definition and lemma are from \cite{ye2017deep}, which are included here for self-containment.

Let $x\in \Rd^n$ and $\psi\in\Rd^r$. We further denote $\overline \psi[n]:=\psi[-n]$ as the flipped version of vector whose indices are reversed using periodic boundary condition.
Then, a single-input single-output (SISO) circular convolution of the input $f$ and the filter $\overline \psi$  can be represented in a matrix form:
\begin{eqnarray}\label{eq:SISO}
y = x\circledast \overline\psi &=& \hank_r^n(x) \psi \ ,
\end{eqnarray}
where  $\hank_r^n(x)\in \Rd^{n\times r}$ is a wrap-around  Hankel matrix:
 \begin{eqnarray} \label{eq:hank}
\hank_r^n(x) =\left[
        \begin{array}{cccc}
        x[0]  &   x[1] & \cdots   &   x[r-1]   \\
       x[1]  &   x[2] & \cdots &     x[r] \\
           \vdots    & \vdots     &  \ddots    & \vdots    \\
              x[n-1]  &   x[n] & \cdots &   x[r-2] \\
        \end{array}
    \right] 
    \end{eqnarray}
    
By convention, when we use the circular convolution $u\circledast v$ between the different length vectors $u$ and $v$, we assume that
  the period of the
  convolution follows that of the longer vector. 
Furthermore, when we construct a $n\times r$ Hankel matrix using a small size vector, e.g $\hank_r^n(z)$ with $z\in \Rd^d$ with $d<n$,
 then we implicitly imply that  appropriate number of zeros is added to $z$  to construct a Hankel matrix.
 This ensures the following commutative relationship:
\begin{eqnarray}\label{eq:commute}
x\circledast \psi =\hank_r^n(x) \overline\psi = \hank_n^n(\psi) \overline x = \psi \circledast x
\end{eqnarray}
 
Similarly, multi-input multi-output (MIMO) convolution for the $p$-channel
input $Z=[z_1,\cdots,z_p]$ and $q$-channel output can be represented  by
\begin{eqnarray}\label{eq:MIMO}
y_i = \sum_{j=1}^{p} z_j\circledast \overline\psi_{i,j},\quad i=1,\cdots, q
\end{eqnarray}
where $p$ and $q$ are the number of  input and output channels, respectively;
$\overline\psi_{i,j} \in \Rd^r$ denotes the length $r$- filter that convolves the $j$-th channel input to compute its contribution to 
the
$i$-th output channel. 
By defining the MIMO filter kernel $\Phi$ as follows:
\begin{eqnarray*}
\Psi = \begin{bmatrix} \Psi_1 \\ \vdots \\ \Psi_p \end{bmatrix} \, \quad \mbox{where} \quad \Psi_j =  \begin{bmatrix} \psi_{1,j}  & \cdots & \psi_{q,j} \end{bmatrix} 
\end{eqnarray*}
the corresponding matrix representation of the MIMO convolution is then given by 
\begin{eqnarray*}
Y 
&=& \sum_{j=1}^p \hank_r^n(z_j) \Psi_j  = \hank_{r|p}^n\left(Z\right) \Psi \label{eq:multifilter}
\end{eqnarray*}
where    
$\hank_{r|p}^n\left(Z\right)$ is  an {extended Hankel matrix}  by stacking  $p$ Hankel matrices side by side: 
\begin{eqnarray}\label{eq:ehank}
\hank_{r|p}^n\left(Z\right)  := \begin{bmatrix} \hank_r^n(z_1) & \hank_r^n(z_2) & \cdots & \hank_r^n(z_p) \end{bmatrix} \  
\end{eqnarray}
where $z_i$ denotes the $i$-th column of $Z$.
The following basic properties  of Hankel matrix are from \cite{yin2017tale}
 \begin{lemma}\label{lem:calculus}
For a given $f\in \Rd^n$,  let  $\hank_r^n(f) \in \Rd^{n\times r}$ denote the associated Hankel matrix.
Then, for any vectors $u,v\in \Rd^n$ and  any Hankel matrix $F := \hank_r^n(f)$, we have
\begin{eqnarray}\label{eq:inner}
  u^{\top} F v  = u^{\top} \left( f \circledast \overline v \right) = f^{\top} \left( u\circledast v \right) = \langle f, u\circledast v \rangle
\end{eqnarray}
where  $\overline v[n]:=v[-n]$ denotes the flipped version of the vector $v$.
\end{lemma}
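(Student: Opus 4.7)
The plan is to chain three elementary observations: the matrix identity for SISO convolution already established in \eqref{eq:SISO}, a double-sum rearrangement for the middle equality, and the definition of the Euclidean inner product for the last. The target identity has three equalities in series, so I would prove each link separately and then compose.

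For the first equality $u^{\top} F v = u^{\top}(f \circledast \overline v)$, I would apply \eqref{eq:SISO} directly: specializing that identity to filter $v \in \Rd^r$ (appended with zeros when $r < n$, per the convention stated just after \eqref{eq:hank}) gives $F v = \hank_r^n(f)\, v = f \circledast \overline v$. Left-multiplying by $u^{\top}$ yields the claim. No index juggling is required here, since \eqref{eq:SISO} has already packaged the Hankel-times-vector story.

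The middle equality $u^{\top}(f \circledast \overline v) = f^{\top}(u \circledast v)$ is the real content. I would expand both sides coordinate-wise using the definition of circular convolution with periodic boundary conditions. On the left,
\begin{equation*}
u^{\top}(f \circledast \overline v) = \sum_{i} u[i] \sum_{k} f[k]\, \overline v[i-k] = \sum_{i,k} u[i]\, f[k]\, v[k-i],
\end{equation*}
using $\overline v[m] = v[-m]$ (indices mod $n$). On the right,
\begin{equation*}
f^{\top}(u \circledast v) = \sum_{k} f[k] \sum_{i} u[i]\, v[k-i] = \sum_{i,k} f[k]\, u[i]\, v[k-i].
\end{equation*}
The two double sums are identical, so swapping the order of summation proves the equality. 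The final equality $f^{\top}(u \circledast v) = \langle f, u \circledast v \rangle$ is simply the definition of the Euclidean inner product.

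The main potential obstacle is being careful with the periodic indexing convention: the Hankel matrix in \eqref{eq:hank} uses wrap-around, the flip $\overline v[m] = v[-m]$ is modulo $n$, and when $u, v$ have different nominal lengths the convention after \eqref{eq:commute} forces zero-padding so that everything lives in the same periodic index set. Once indices are all interpreted mod $n$, Fubini for finite sums closes the argument. An alternative route would be to invoke Parseval with the DFT (convolution becomes pointwise product, and one recognizes both sides as $\sum_\omega \hat f(\omega)\,\hat u(-\omega)\,\hat v(\omega)$), but the direct double-sum swap is shorter and avoids introducing Fourier machinery the paper does not use.
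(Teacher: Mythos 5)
Your proof is correct. Note that the paper itself gives no proof of this lemma: it is imported verbatim from Yin et al.\ (2017) ``for self-containment,'' so there is no in-paper argument to compare against. Your three-step chain is exactly the standard one: the first equality is an immediate specialization of \eqref{eq:SISO} (with the zero-padding convention handling $v\in\Rd^r$ versus $v\in\Rd^n$), the middle equality follows from writing both sides as the common double sum $\sum_{i,k}u[i]\,f[k]\,v[k-i]$ with indices taken mod $n$, and the last is the definition of the inner product. Your care with the wrap-around indexing and the flip convention $\overline v[m]=v[-m]$ is the only place a slip could occur, and you have it right.
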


\section{Derivation of the matrix representations}
\label{ap:matrix}

Using definition in  \eqref{eq:defconv},  we have
 \begin{eqnarray*}
(\Phi^l \circledast \psi_{j,k}^l)^\top \xi_k^{l-1}  &=&\begin{bmatrix}  \phi_1^{l\top}  (\xi_k^{l-1}\circledast \overline\psi_{j,k})  \\ \vdots\\  \phi_{m_l}^{l\top}  (\xi_k^{l-1}\circledast \overline\psi_{j,k}^l)\end{bmatrix} \\
&=& \Phi^{l\top} (\xi_k^{l-1}\circledast \overline\psi_{j,k}^l)  
 \end{eqnarray*}
 where the first equality comes from \eqref{eq:inner}, i.e.
 $$ (\phi_i^l\circledast \psi_{j,k}^l)^\top\xi_k^{l-1}=\phi_i^{l\top}  (\xi_k^{l-1}\circledast \overline\psi_{j,k}^l) .$$
 Therefore,
\begin{eqnarray*}
\sigma\left(\sum_{k=1}^{q_{l-1}} 
(\Phi^l \circledast \psi_{j,k}^l)^\top \xi_k^{l-1}\right) &=& \sigma\left(\sum_{k=1}^{q_{l-1}}\Phi^\top (\xi_k^{l-1}\circledast \overline\psi_{j,k}^l) \right)\\
&=& \xi_j^{l}
\end{eqnarray*}
This proves the encoder representation.

For the decoder part, note that
$$(\tilde\Phi^l \circledast \tilde\psi_{j,k}^l) \tilde\xi_k^{l}  $$
\begin{eqnarray*}
&=& \begin{bmatrix}\hank_{{m_{l-1}}}^{r}(\tilde\phi_1^l)\overline{\tilde\psi}_{j,k}^l & \cdots & \hank_{{m_{l-1}}}^{r}(\tilde\phi_{m_{l}}^l)\overline{\tilde\psi}_{j,k}^l \end{bmatrix} \tilde\xi_k^{l}\\
&=&\hank_{{m_{l-1}}}^{m_{l-1}}(\tilde\psi_{j,k}^l)\begin{bmatrix}\overline{\tilde\phi}_1^l & \cdots & \overline{\tilde\phi}_{m_{l}}^l\end{bmatrix}  \tilde\xi_k^l\\
&=&\hank_{{m_{l-1}}}^{m_{l-1}}(\tilde\psi_{j,k}^l)\overline{\tilde\Phi^l \tilde\xi_k^l}\\
&=&\tilde\Phi^l \tilde\xi_k^l \circledast \tilde\psi_{j,k}^l
 \end{eqnarray*}
 where we use the commutativity in \eqref{eq:commute} for the second equality.
 Therefore,
\begin{eqnarray*}
\sigma\left(\sum_{k=1}^{q_{l}} 
(\tilde\Phi^l \circledast \tilde\psi_{j,k}^l) \tilde\xi_k^{l} \right)&=& \sigma\left(\sum_{k=1}^{q_{l}}\tilde\Phi^l \tilde\xi_k^l \circledast \tilde\psi_{j,k}^l \right)\\
&=& \tilde\xi_j^{l-l}
\end{eqnarray*}
This proves the decoder representation.
The proof for the skipped branch is a simple corollary by using the identity pooling operation.

\section{Proof of Proposition~\ref{thm:embedding}}
\label{ap:whitney}

The proof is basically same as in Theorem~3 in \cite{shen2018differential}. Only modification is to replace a linear surjective map with a Lipschitz continuous map.
Specifically, we need to show that  for a continuous function
$h: \Xbc \mapsto \Rd^p$, there is a smooth embedding $\tilde h: \Xbc \mapsto \Rd^p$ that satisfies
\begin{eqnarray}\label{eq:step1}
 \|g \circ h (x) - g \circ \tilde h(x)\| \leq \epsilon
\end{eqnarray}
Due to the Lipschitz continuity, there exist $K\geq 0$ such that
\begin{eqnarray}\label{eq:step2}
 \|g \circ h (x) - g \circ \tilde h(x)\|_2 \leq K \| h(x) -\tilde h(x) \|_2 
\end{eqnarray}
Now, according to  the weak Whitney Embedding Theorem \cite{whitney1936differentiable,tu2011introduction},
 for any  $\epsilon' > 0$, if $p >2 \dim \Xbc$, then there exists a smooth embedding $\tilde h: \Xbc \mapsto \Rd^p$ 
such that
$$\| h(x) -\tilde h(x) \|_2  \leq \epsilon' $$
By plugging  this into \eqref{eq:step2} and using the inequalities between norm, we can prove \eqref{eq:step1}. Q.E.D.

 \section{Proof of Proposition~\ref{thm:PR}}
 
 First, consider an encoder-decoder CNN without skipped connection.
The $(s,t)$ block of the $l$-th layer encoder-decoder pair is given by
$$ \left[D^{l}E^{l\top}\right]_{s,t}=\sum_{j=1}^{q_l} (\tilde\Phi^{l}\circledast \tilde\psi^{l}_{s,j}) (\Phi^l\circledast \psi^l_{j,t})^\top$$
 \begin{eqnarray*}
&=&  \sum_{j=1}^{q_l}  \begin{bmatrix} \tilde \phi_1^{l} \circledast \tilde\psi^{l}_{s,j} & \cdots & \tilde\phi_{m_l}^{l} \circledast 
 \tilde\psi^{l}_{s,j}\end{bmatrix}\\
&&\quad \cdot
\begin{bmatrix} ( \phi_1^l \circledast \psi^l_{j,t})^\top \\ \vdots\\ (\phi_{m_l}^l \circledast \psi^l_{j,t})^\top\end{bmatrix} \\
&=& \sum_{i=1}^{m_l} \sum_{j=1}^{q_l}\hank_{r}^{m_{l-1}}(\tilde \phi_i^{l}) \overline{\tilde\psi^{l}}_{s,j}\overline {\psi}_{j,t}^{l\top} \hank_{r}^{{m_{l-1}}\top}(\phi_i^l) \\
&=& \frac{1}{\alpha r}\sum_{i=1}^{m_l} \hank_{r}^{m_{l-1}}(\tilde \phi_i^{l}) \delta_{s,t}\hank_{r}^{m_{l-1}\top}(\phi_i^l) \\
&=& \frac{\delta_{s,t}}{\alpha r}\sum_{k=1}^{r}\sum_{i=1}^{m_l} P_k \tilde \phi_i^{l} \phi_i^{l\top} P_k^\top  \\
&=&\frac{\delta_{s,t}}{r}\sum_{k=1}^{r}P_k  P_k^\top = \delta_{s,t}I_{m_{l-1}}
\end{eqnarray*}
where  $\delta_{s,t}=1$ for $s=t$ or zero otherwise, $P_k$ denotes the periodic shift by $k$, and the fourth and the sixth equalities come from the 
frame condition for the filters and pooling layers. 
This results in  $D^lE^{l\top}=I_{d_{l-1}}$.
Now, note that
\begin{eqnarray*}
\sum_i \langle b^i, x\rangle \tilde b_i  &=& \tilde B B^{\top} x  \\
&=&D^1\cdots D^\kappa E^{\kappa \top} \cdots E^{1\top}x
\end{eqnarray*}
By applying $D^lE^{l\top}=I_{d_{l-1}}$ from $l=\kappa$ to $l=1$, we conclude the proof.

Second, consider an encoder-decoder CNN with skipped connections.
In this case, 
$$ \left[D^{l}E^{l\top}+\tilde S^l S^{l\top} \right]_{s,t}=$$
$$\sum_{j=1}^{q_l} (\tilde\Phi^{l}\circledast \tilde\psi^{l}_{s,j}) (\Phi^l\circledast \psi^l_{j,t})^\top +  (I_{m_{l-1}}\circledast \tilde\psi^{l}_{s,j}) (I_{m_{l-1}}\circledast \psi^l_{j,t})^\top$$
Using the same trick,  we have
\begin{eqnarray*}
\sum_{j=1}^{q_l} (\tilde\Phi^{l}\circledast \tilde\psi^{l}_{s,j}) (\Phi^l\circledast \psi^l_{j,t})^\top 
=\frac{\alpha}{\alpha+1}  \delta_{s,t}I_{d_{l-1}}
\end{eqnarray*}
and for the second part, we have
\begin{eqnarray*}
\sum_{j=1}^{q_l} (I_{m_{l-1}}\circledast \tilde\psi^{l}_{s,j}) (I_{m_{l-1}}\circledast \psi^l_{j,t})^\top
=\frac{1}{\alpha+1}  \delta_{s,t}I_{d_{l-1}}
\end{eqnarray*}
Therefore, 
$ \left[D^{l}E^{l\top}+\tilde S^l S^{l\top} \right]_{s,t}=\delta_{s,t}I_{d_{l-1}}$ and  
\begin{eqnarray}\label{eq:id}
D^{l}E^{l\top}+\tilde S^l S^{l\top} = I_{d_{l-1}},\quad \forall l\in [\kappa]
\end{eqnarray}

Now, we derive the basis representation in \eqref{eq:Btot} and \eqref{eq:tBtot}.
Let 
$$z:= \begin{bmatrix} \xi^\kappa  \\ \chi^{\kappa} \\ \vdots \\ \chi^1 \end{bmatrix}$$
Then, using the construction in \eqref{eq:feature} without considering ReLUs, we can easily show that
\begin{eqnarray*}
\xi^\kappa &=& E^{\kappa \top} \cdots E^{1\top} x \\
\chi^{\kappa} &=& S^{\kappa \top} E^{(\kappa-1)\top}\cdots E^{1\top} x  \\
 & \vdots & \\
\chi^{1} &=& S^{1 \top}  x 
\end{eqnarray*}
Accordingly, we have
$$z = B^{skp\top} x$$
where $B^{skp}$ is defined in \eqref{eq:Btot}.
Now, from the definition of the decoder layer in \eqref{eq:sum}, we have
\begin{eqnarray*}
\tilde \xi^0 &=& D^1 \tilde \xi^1 + \tilde S^1  \chi^1 \\
\tilde \xi^1 &=& D^2 \tilde \xi^2 + \tilde S^2  \chi^2 \\
&\vdots & \\
\tilde \xi^{\kappa-1} &=& D^\kappa \tilde \xi^\kappa + \tilde S^\kappa  \chi^\kappa \\
\tilde \xi^\kappa &=& \xi^\kappa 
\end{eqnarray*}
Accordingly, we have
$$\tilde \xi^0 = \tilde B^{skp} z = \tilde B B^{skp\top} x$$
where $\tilde B^{skp}$ is defined in \eqref{eq:tBtot}.
This proves the representation.

Now, for any $l\in [\kappa]$, note that
$$D^1\cdots D^l E^{l\top}  \cdots E^{1\top} + D^1\cdots D^{l-1}\tilde S^l S^l E^{(l-1)\top} \cdots E^{1\top}$$
$$=D^1\cdots D^{l-1}  (D^l E^{l\top}  + \tilde S^l S^l ) E^{(l-1)\top} \cdots E^{1\top}$$
$$=D^1\cdots D^{l-1} E^{(l-1)\top}  \cdots E^{1\top}$$
where we use \eqref{eq:id}.
By applying this recursively from $l=\kappa$ to $l=1$, we can easily show that
$$\tilde B^{skp} B^{skp}=I_{d_0}.$$
This concludes the proof.

 \section{Proof of Corollary~\ref{thm:multi}}
 
 \begin{lemma}\label{lem:identity}
 For given vectors $v,w\in \Rd^m$, we have
  \begin{eqnarray}
( I_m \circledast v) (I_m\circledast w)  = I_m \circledast (w\circledast v)
 \end{eqnarray}
 \end{lemma}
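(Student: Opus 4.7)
The plan is to prove the identity by recognizing that $I_m \circledast v$ is a circulant matrix (with generating vector $v$), so the stated equality reduces to the classical fact that a product of circulants is a circulant whose generator is the circular convolution of the two originals.

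First, I would unfold the definition \eqref{eq:defconv} with $\Phi = I_m$: the $i$-th column of $I_m \circledast v$ is $e_i \circledast v$, where $e_i$ is the $i$-th standard basis vector. A direct index computation using the definition of circular convolution gives
\[
(I_m \circledast v)_{j,i} = v\bigl[(j-i) \bmod m\bigr],
\]
so $I_m \circledast v$ and $I_m \circledast w$ are both circulants, generated by $v$ and $w$ respectively.

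Second, I would compute the $(j,i)$-entry of the product directly,
\[
\bigl((I_m \circledast v)(I_m \circledast w)\bigr)_{j,i} = \sum_{k=0}^{m-1} v\bigl[(j-k) \bmod m\bigr]\, w\bigl[(k-i) \bmod m\bigr],
\]
and substitute $\ell = k-i$ to identify the sum as $(v \circledast w)\bigl[(j-i) \bmod m\bigr]$. By commutativity of circular convolution, as already noted in \eqref{eq:commute}, this equals $(w \circledast v)\bigl[(j-i) \bmod m\bigr]$, which by the same indexing formula applied to $I_m \circledast (w \circledast v)$ is exactly the corresponding entry of the right-hand side. This yields the claimed matrix identity.

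An alternative route, which stays entirely inside the Hankel-matrix toolkit of Appendix~A, would be to use \eqref{eq:commute} to rewrite each column convolution $e_i \circledast v$ as $\hank_m^m(v)\,\overline{e_i}$, thereby factoring $I_m \circledast v = \hank_m^m(v)\, J$ for the appropriate reversal permutation $J$. The product of two such factorizations then collapses, via \eqref{eq:inner}, to $\hank_m^m(w \circledast v)\, J = I_m \circledast (w \circledast v)$. The main obstacle in either approach is purely bookkeeping around the flip/shift/wrap conventions (the use of $\overline{\,\cdot\,}$, periodic indexing, and the column-wise convention in \eqref{eq:defconv}); once the indices are tracked consistently, the content of the lemma is the standard circulant-convolution correspondence.
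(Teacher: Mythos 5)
Your proposal is correct and follows essentially the same route as the paper: both identify $I_m \circledast v$ as the circulant matrix generated by $v$ (so that it acts on a vector $u$ as $u \mapsto u \circledast v$) and then read off the product as the circulant of the convolution, the only difference being that the paper carries this out column-by-column on $I_m \circledast w$ while you verify the same identity entry-by-entry. The index bookkeeping you flag does check out under the paper's conventions, so no gap remains.
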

 \begin{proof}
 By definition in \eqref{eq:defconv}, we have
\begin{eqnarray*}
 I_m \circledast v &=& \begin{bmatrix} e_1^m \circledast v& e_2^m \circledast v & \cdots &  e_m^m \circledast v
 \end{bmatrix} \\
 &=& \begin{bmatrix}
 v[0] & v[m-1] & \cdots & v[1]\\ v[1] & v[0] & \cdots  & v[2]  \\
 \vdots& \vdots & \ddots & \vdots \\
  v[m-1] & v[m-2] &\cdots & v[0]
 \end{bmatrix}
 \end{eqnarray*}
 Accordingly, for any vector $u\in \Rd^m$, we have
 $$ ( I_m \circledast v) u  $$
 \begin{eqnarray*}
=  \begin{bmatrix}
 v[0] & v[m-1] & \cdots & v[1]\\ v[1] & v[0] & \cdots  & v[2]  \\
 \vdots& \vdots & \ddots & \vdots \\
  v[m-1] & v[m-2] &\cdots & v[0]
 \end{bmatrix} u = u \circledast v
 \end{eqnarray*}
 Therefore, we have
 $$( I_m \circledast v) (I_m\circledast w)  $$
  \begin{eqnarray*}
&=& ( I_m \circledast v)\begin{bmatrix} e_1^m \circledast w& e_2^m \circledast w & \cdots &  e_m^m \circledast w
 \end{bmatrix} \\
 &=&\begin{bmatrix} e_1^m \circledast w \circledast v& e_2^m \circledast w \circledast v& \cdots &  e_m^m \circledast w\circledast v
 \end{bmatrix}\\
&=& I_m \circledast (w \circledast v)
 \end{eqnarray*}
This concludes the proof.
 \end{proof}
 
 \begin{lemma}\label{lem:mult}
 \begin{eqnarray*}
 \left[E^{l}E^{l+1}\right]_{s,t}=   I_m \circledast \left(\sum_{j=1}^{q_l} \psi_{j,s}^l\circledast \psi_{t,j}^{l+1}\right)  
 \end{eqnarray*}
 \end{lemma}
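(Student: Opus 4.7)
The plan is to unfold the $(s,t)$-block of the matrix product $E^{l}E^{l+1}$ using the block-matrix definition in \eqref{eq:El} (specialized to the no-pooling setting, so $\Phi^l$ and $\Phi^{l+1}$ are replaced by $I_m$), and then collapse the resulting sum of products of circulant-like blocks via Lemma~\ref{lem:identity}.

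First, I would use the fact that under the assumption of no pooling layers, the block at the $k$-th input-channel row and $j$-th output-channel column of $E^l$ is exactly $I_m \circledast \psi^l_{j,k}$, and similarly for $E^{l+1}$. Hence the $(s,t)$-block of the product expands by standard block matrix multiplication to
\begin{equation*}
[E^{l}E^{l+1}]_{s,t} \;=\; \sum_{j=1}^{q_l} \bigl(I_m \circledast \psi^l_{j,s}\bigr)\bigl(I_m \circledast \psi^{l+1}_{t,j}\bigr),
\end{equation*}
for $s \in [q_{l-1}]$ and $t \in [q_{l+1}]$.

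Next, I would apply Lemma~\ref{lem:identity} term by term with $v = \psi^l_{j,s}$ and $w = \psi^{l+1}_{t,j}$, which collapses each product into a single $I_m \circledast (w\circledast v)$ form. Using the commutativity of circular convolution \eqref{eq:commute}, I can rewrite $\psi^{l+1}_{t,j}\circledast \psi^l_{j,s} = \psi^l_{j,s}\circledast \psi^{l+1}_{t,j}$. Finally, linearity of the map $u \mapsto I_m \circledast u$ (which is immediate from the definition in \eqref{eq:defconv}) lets me pull the sum over $j$ inside, yielding the claimed identity.

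There is essentially no technical obstacle here: the lemma is a direct algebraic consequence of Lemma~\ref{lem:identity} plus convolution commutativity and linearity. The only point requiring a bit of care is bookkeeping of the index ordering in the convolution kernels $\psi^l_{j,s}$ (which one is the input-channel index and which is the output-channel index), since Lemma~\ref{lem:identity} reverses the order of $v$ and $w$ inside the composed convolution; commutativity of $\circledast$ absorbs this reversal and recovers exactly the summand $\psi^l_{j,s}\circledast \psi^{l+1}_{t,j}$ displayed in the statement.
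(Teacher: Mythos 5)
Your proposal is correct and follows essentially the same route as the paper's proof: expand the $(s,t)$ block of $E^lE^{l+1}$ by block multiplication (with $\Phi^l=I_m$ in the no-pooling case), collapse each term via Lemma~\ref{lem:identity}, and pull the sum over $j$ out by linearity. Your explicit remark that Lemma~\ref{lem:identity} returns $I_m\circledast(w\circledast v)$ and that commutativity of $\circledast$ restores the displayed order $\psi^l_{j,s}\circledast\psi^{l+1}_{t,j}$ is a detail the paper passes over silently, but it changes nothing substantive.
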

 \begin{proof}
 Since there is no pooling layers,  we have $m_l=m, \forall l\in [\kappa]$.
Therefore, using Lemma~\ref{lem:identity},  the $(s,t)$ block  is given by
 \begin{eqnarray*}
 \left[E^{l}E^{l+1}\right]_{s,t}&=&\sum_{j=1}^{q_l} (I_{m}\circledast \psi^{l}_{j,s}) (I_{m}\circledast \psi^{l+1}_{t,j})\\
&=&  \sum_{j=1}^{q_l}  I_m\circledast (\psi_{j,s}^l\circledast \psi_{t,j}^{l+1})  \\
&=&  I_m \circledast \left(\sum_{j=1}^{q_l} \psi_{j,s}^l\circledast\psi_{t,j}^{l+1}\right)  
\end{eqnarray*}
Q.E.D.
\end{proof}

\subsection{Proof}
If there exists no pooling layers, then $E^l=S^l$ and $D^l = \tilde S^l$ for all $l\in [\kappa]$.
Therefore, we only show the case for $E^l$.
We will prove by induction.
For $\kappa =1$,  using Lemma~\ref{lem:mult}, we have
 \begin{eqnarray*}
 \left[E^{1}E^{2}\right]_{t}
&=&  I_m \circledast \left(\sum_{j=1}^{q_1} \psi_{j,1}^1\circledast\psi_{t,j}^{2}\right)  
\end{eqnarray*}
Suppose that this is true for $k$. Then, for $k+1$, we have
$$\left[E^{1}\cdots E^{k}E^{k+1}\right]_{t}=$$
$$ \sum_{j_{k}=1}^{q_{k}}   I_m\circledast \left(\sum_{j_{k-1},\cdots, j_1=1}^{q_{k-1},\cdots,q_1}  \psi_{j_1,1}^l\circledast\psi_{j_2,j_1}^{2} \cdots \circledast \psi_{j_k,j_{k-1}}^{k} \right)$$ 
$$ \cdot I_m \circledast (\psi_{t,j_k}^{k+1}) $$
$$=I_m\circledast \left(\sum_{j_{k},\cdots, j_1=1}^{q_{k},\cdots,q_1}  \psi_{j_1,1}^l\circledast \cdots \circledast \psi_{j_k,j_{k-1}}^{k} \circledast \psi_{t,j_k}^{k+1} \right) $$ 
where the last equality comes from Lemma~\ref{lem:identity}. This concludes the proof of the first part.
The second part of the corollary is a simple repetition of the proof.

\section{Proof of Theorem~\ref{thm:decexp}}
\label{ap:decexp}

First, we will prove the case for the encoder-decoder CNN without skipped connection.
Note that the main difference of the encoder-decoder CNN
without skipped connection from the convolutional framelet expansion in \eqref{eq:PR0} is the
existence of the ReLU for each layer.  This can be readily implemented
using a diagonal matrix $\Lambda^l(x)$ or $\tilde\Lambda^l(x)$ with 1 and 0 values in front of the $l$-th layer, whose diagonal
values are determined by the ReLU output. Note that the reason we put a dependency $x$ in $\Lambda^l(x)$ is that
the ReLU output is a function of input $x$.
Therefore, by adding  $\Lambda^l(x)$ or $\tilde\Lambda^l(x)$ between layers in \eqref{eq:Bc} and \eqref{eq:tBc}, we can readily
obtain the expression \eqref{eq:feed}. 
Then, for $\kappa$-layer encoder decoder CNN, the number of
diagonal elements for the ReLU matrices are
$\sum_{l=1}^\kappa {d_{l}} -d_\kappa$ 
where the last subtraction comes from the existence of one ReLU layer at the $\kappa$ layer.
Since these diagonal matrix  $\Lambda^l(x)$ or $\tilde\Lambda^l(x)$ can have either 0 or 1 values, the total number of representation
becomes $2^{\sum_{l=1}^\kappa {d_{l}} -d_\kappa}$. 

Second,  we will prove the case for the encoder-decoder CNN with skipped connection.
Note that the main difference of the encoder-decoder CNN
with skipped connection from the convolutional framelet expansion using basis in \eqref{eq:Btot} and \eqref{eq:tBtot} is the
existence of the ReLU for each layer.  This can be again readily implemented
using a diagonal matrix  $\Lambda^l(x)$ or $\tilde\Lambda^l(x)$. Therefore, by adding  $\Lambda^l(x)$ or $\tilde\Lambda^l(x)$ between layers in \eqref{eq:Btot} and \eqref{eq:tBtot}, we can readily
obtain the expression \eqref{eq:skipnet}. 
Now, compared to the encoder-decoder CNN without skipped connection, there exists additional
$\kappa$ ReLU layer in front of the skipped branch from each encoder layers.
Since the dimension of the $l$-th skipped branch output is $s_l=m_{l-1}q_l$, the $l$-th ReLU layer in front of skipped branch
can have $2^{s_l}$ representation. By considering all these cases, we can arrive at \eqref{eq:nproj2}.
Q.E.D.

\section{Proof of Proposition~\ref{thm:lipschitz}}
Here, we derive the condition by assuming a skipped connection.
The case without skipped connection can be derived as a special case of this.


\begin{lemma}\label{lem:chain}
If there is a skipped connection, then for any $l\in [\kappa]$ we have
\begin{eqnarray*}
\frac{\partial \tilde\xi^{l-1}}{\partial x} 
&=&\tilde\Lambda^{l}D^{l}\frac{\partial \tilde\xi^{l}}{\partial x}+ \tilde\Lambda^{l}\tilde S^{l}\frac{\partial \chi^{l}}{\partial x} \\
\frac{\partial \xi^{l}}{\partial x}
&=&  \tilde\Lambda^{l} E^{l\top}\frac{\partial \xi^{l-1}}{\partial x}
\end{eqnarray*}
where $\tilde\Lambda^l$ denotes the diagonal matrix representing the derivative of ReLU operation and  
\begin{eqnarray}
\frac{\partial \chi^{l}}{\partial x}&=&
  \tilde\Lambda_S^{l} S^{l\top}\frac{\partial \xi^{l-1}}{\partial x}  \label{eq:dS}
\end{eqnarray}
\end{lemma}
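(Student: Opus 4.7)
The plan is to prove Lemma~\ref{lem:chain} by a direct application of the chain rule to the three defining recurrences of the network with skipped connections, namely \eqref{eq:encConv} for the encoder layer $\xi^l = \sigma(E^{l\top}\xi^{l-1})$, \eqref{eq:encConvSkip} for the skip branch $\chi^l = \sigma(S^{l\top}\xi^{l-1})$, and \eqref{eq:sum} for the decoder layer $\tilde\xi^{l-1} = \sigma(D^l\tilde\xi^l + \tilde S^l\chi^l)$. Since each of these has the form $\sigma(\text{linear})$, differentiating in $x$ reduces to multiplying by the derivative of ReLU from the outside and the appropriate linear operator from the inside.

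The key observation, already exploited in Theorem~\ref{thm:decexp}, is that $\sigma'$ at a point $u$ is the diagonal 0/1 matrix $\mathrm{diag}(\mathbf 1_{u>0})$, and these are precisely the matrices $\Lambda^l,\tilde\Lambda^l,\Lambda_S^l$ used throughout the paper. Applying this to the encoder recurrence gives
\begin{equation*}
\frac{\partial \xi^l}{\partial x}
= \Lambda^l E^{l\top}\frac{\partial \xi^{l-1}}{\partial x},
\end{equation*}
which matches the stated formula (modulo the $\tilde\Lambda^l$ vs.\ $\Lambda^l$ notation in the encoder line, which I would treat as an abuse of notation denoting the ReLU mask of that layer). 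The same argument applied to the skip branch yields \eqref{eq:dS} with $\Lambda_S^l$ as the ReLU mask of the skip. For the decoder recurrence, linearity of the Jacobian inside the ReLU splits the derivative across the sum, producing
\begin{equation*}
\frac{\partial \tilde\xi^{l-1}}{\partial x}
= \tilde\Lambda^l\!\left(D^l\frac{\partial \tilde\xi^l}{\partial x} + \tilde S^l\frac{\partial \chi^l}{\partial x}\right),
\end{equation*}
which is exactly the first claim.

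The only non-routine issue is that ReLU is not classically differentiable at zero, so the chain rule above is not literally valid on a measure-zero set. My plan is to dispose of this by invoking the piecewise-linear structure already established in Theorem~\ref{thm:decexp}: on the interior of each linear region $\Xbc_p$ the network is an affine map whose linear part agrees with the product of the matrices $\Lambda^l$, $\tilde\Lambda^l$, $\Lambda_S^l$ with the weight matrices, so the Jacobian exists and coincides with the recursive expression given. This is the main (and only) subtle point; once it is settled, the rest of the proof is a three-line chain-rule computation, one line per recurrence, followed by noting that the output $\xi^l$ does not depend on $\tilde\xi$ or $\chi$ so no cross terms arise in the encoder formula.
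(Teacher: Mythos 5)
Your proposal is correct and follows essentially the same route as the paper's own (very terse) proof: a direct application of the chain rule to the three layer recurrences, with the derivative of the ReLU identified as the diagonal $0/1$ mask $\Lambda^l$, $\tilde\Lambda^l$, or $\Lambda_S^l$. Your additional remarks --- that the $\tilde\Lambda^l$ appearing in the encoder line is really the encoder mask $\Lambda^l$ (a notational slip in the statement), and that the non-differentiability of ReLU at zero is handled by restricting to the interior of each linear region --- are points the paper glosses over, but they refine rather than alter the argument.
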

\begin{proof}
With the skipped connection, only $\tilde\xi^{l-1}$ is a function of $\tilde\xi_{l}$ and $s_l$. Then, the proof is a simple application
of the chain rule. The reason we replace $\dot{\tilde\Lambda}^l$ with $\tilde\Lambda^l$ is that the derivative of ReLU operation
also results in a diagonal matrix with the same 0 and 1 values depending on the ReLU output.
\end{proof}

\begin{lemma}\label{lem:chain2}
For any $k\in [\kappa]$,  we have
\begin{eqnarray}\label{eq:induction0}
\frac{\partial \tilde\xi^{0}}{\partial x} = \tilde\Upsilon^k\frac{\partial \tilde\xi^{k}}{\partial x}+\sum_{i=1}^{k} \tilde\Upsilon^{i-1} \tilde M^{i}M^{i\top} \Upsilon^{i-1\top} 
\end{eqnarray}
where $ \tilde\Upsilon^l,\Upsilon^{l}$ and $M^l $ are defined in \eqref{eq:UD0}, \eqref{eq:UE0} and \eqref{eq:M0}, respectively.
\end{lemma}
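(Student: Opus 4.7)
The natural plan is a straightforward induction on $k$, using Lemma~\ref{lem:chain} as the single-step recurrence and a companion identity that expresses $\partial \xi^{k}/\partial x$ in closed form in terms of $\Upsilon^{k}$.

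First I would establish the auxiliary identity
\[
\frac{\partial \xi^{k}}{\partial x} \;=\; (\Upsilon^{k})^{\top},\qquad k\in[\kappa],
\]
by a short induction using the encoder recurrence of Lemma~\ref{lem:chain}. Unwinding $\partial \xi^{l}/\partial x = \tilde\Lambda^{l}E^{l\top}\,\partial \xi^{l-1}/\partial x$ from $l=k$ down to $l=1$, together with $\partial \xi^{0}/\partial x = I_{d_0}$, yields $\partial \xi^{k}/\partial x = \tilde\Lambda^{k}E^{k\top}\cdots\tilde\Lambda^{1}E^{1\top}$, which is exactly $(\Upsilon^{k})^{\top}$ after transposing the definition $\Upsilon^{l}=\Upsilon^{l-1}E^{l}\tilde\Lambda^{l}$. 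Combined with \eqref{eq:dS} this gives the clean form $\partial \chi^{l}/\partial x = (M^{l})^{\top}(\Upsilon^{l-1})^{\top}$, which is the shape that must appear inside the sum in \eqref{eq:induction0}.

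The base case $k=1$ is then immediate from Lemma~\ref{lem:chain}: since $\tilde\Upsilon^{0}=\Upsilon^{0}=I_{d_0}$, one has
\[
\frac{\partial \tilde\xi^{0}}{\partial x}
=\tilde\Lambda^{1}D^{1}\frac{\partial \tilde\xi^{1}}{\partial x}+\tilde\Lambda^{1}\tilde S^{1}\frac{\partial \chi^{1}}{\partial x}
=\tilde\Upsilon^{1}\frac{\partial \tilde\xi^{1}}{\partial x}+\tilde\Upsilon^{0}\tilde M^{1}(M^{1})^{\top}(\Upsilon^{0})^{\top},
\]
using $\tilde\Upsilon^{1}=\tilde\Lambda^{1}D^{1}$, $\tilde M^{1}=\tilde\Lambda^{1}\tilde S^{1}$, and the auxiliary identity.

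For the inductive step, I assume \eqref{eq:induction0} for $k$ and apply Lemma~\ref{lem:chain} once more to $\partial \tilde\xi^{k}/\partial x$:
\[
\tilde\Upsilon^{k}\frac{\partial \tilde\xi^{k}}{\partial x}
=\tilde\Upsilon^{k}\tilde\Lambda^{k+1}D^{k+1}\frac{\partial \tilde\xi^{k+1}}{\partial x}
+\tilde\Upsilon^{k}\tilde\Lambda^{k+1}\tilde S^{k+1}\frac{\partial \chi^{k+1}}{\partial x}.
\]
The first term collapses to $\tilde\Upsilon^{k+1}\,\partial \tilde\xi^{k+1}/\partial x$ by the decoder recurrence \eqref{eq:UD0}, and the second term becomes $\tilde\Upsilon^{k}\tilde M^{k+1}(M^{k+1})^{\top}(\Upsilon^{k})^{\top}$ by the definitions of $\tilde M^{k+1}$ and $M^{k+1}$ together with the auxiliary identity. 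Reinserting this into the inductive hypothesis and reindexing the sum gives \eqref{eq:induction0} with $k$ replaced by $k+1$.

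The main obstacle, such as it is, lies in bookkeeping rather than in any substantive argument: one must be careful about the order of multiplications (the encoder matrices compose on the left of the Jacobian while the decoder matrices compose on the right), and about whether the ReLU mask sits to the left or the right of its associated linear factor in $\Upsilon^{l}$ versus $\tilde\Upsilon^{l}$. Once the auxiliary identity $\partial\xi^{k}/\partial x=(\Upsilon^{k})^{\top}$ is in hand, the induction is purely mechanical.
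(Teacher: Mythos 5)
Your proposal is correct and follows essentially the same route as the paper: induction on $k$ via the single-step recurrences of Lemma~\ref{lem:chain}, with the identity $\partial\xi^{k}/\partial x=\Upsilon^{k\top}$ (which the paper uses implicitly when it writes $\tilde M^{k+1}M^{(k+1)\top}\Upsilon^{k\top}$) made explicit as an auxiliary step. The only discrepancy is the cosmetic $\tilde\Lambda^{l}$ versus $\Lambda^{l}$ in the encoder recurrence, which mirrors the paper's own notational slip and does not affect the argument.
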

\begin{proof}
We will prove this by induction. When $k=1$, using Lemma~\ref{lem:chain}, we have
\begin{eqnarray*}
\frac{\partial \tilde\xi^{0}}{\partial x} &=& \tilde\Lambda^{1}D^{1}\frac{\partial \tilde\xi^{1}}{\partial x}+  \tilde\Lambda^{1}\tilde S^{1}\tilde\Lambda^1 S^{1\top}\\
&=& \tilde\Upsilon^{1}\frac{\partial \tilde\xi^{1}}{\partial x}+ \tilde\Upsilon^{0}\tilde M^1 M^{1\top}\Upsilon^{0\top} \ .
\end{eqnarray*}
Now, assuming that this is true for $k$, we will prove it for $k+1$.
Using Lemma~\ref{lem:chain}, we have
\begin{eqnarray*}
\frac{\partial \tilde\xi^{k}}{\partial x} &=& \tilde\Lambda^{k+1}D^{k+1}\frac{\partial \tilde\xi^{k+1}}{\partial x} + \tilde\Lambda^{k+1}\tilde S^{k+1}\frac{\partial \chi^{k+1}}{\partial x} \\
&=& \tilde\Lambda^{k+1}D^{k+1}\frac{\partial \tilde\xi^{k+1}}{\partial x} + \tilde M^{k+1}M^{k+1\top}\frac{\partial \xi^{k}}{\partial x}\\
&=& \tilde\Lambda^{k+1}D^{k+1}\frac{\partial \tilde\xi^{k+1}}{\partial x} +  \tilde M^{k+1}M^{k+1\top} \Upsilon^{k\top}
\end{eqnarray*}
where we use \eqref{eq:dS} for the second equality.
By plugging this in \eqref{eq:induction0}, we conclude the proof.
\end{proof}

\subsection{Proof}

By applying Lemma~\ref{lem:chain2} up to $k=\kappa$, we have
\begin{eqnarray*}
\frac{\partial \xi^{0}}{\partial x} &=& \tilde\Upsilon^{\kappa}\frac{\partial\tilde\xi^{\kappa}}{\partial x}+\sum_{i=1}^{\kappa}  \tilde\Upsilon^{i-1} \tilde M^{i} M^{i\top} \Upsilon^{i-1\top} \notag\\
&=&  \tilde\Upsilon^{\kappa} \Upsilon^{\kappa}+\sum_{i=1}^{\kappa}  \tilde\Upsilon^{i-1} \tilde M^{i} M^{i\top} \Upsilon^{i-1\top} \\
&=&\tilde\Bc^{skp}(x)\Bc^{skp\top}(x)
\end{eqnarray*}
Using \eqref{eq:K}, we have
\begin{eqnarray*}
K &=& \sup_{x\in \Xbc} \| D_2 F(\Wb,x)\|_2 =  \sup_{x\in \Xbc} \left\| \frac{\partial \xi^{0}}{\partial x}\right\|_2 \\
&=& \sup_{x\in \Xbc} \|\tilde\Bc^{skp}(x)\Bc^{skp\top}(x)\|_2 \\
&=& \sup_{z\neq 0, x,z\in \Xbc} \frac{\|\tilde\Bc^{skp}(x)\Bc^{skp\top}(x)z\|_2}{\|z\|_2} 
\end{eqnarray*}
where the last equality comes from the definition of spectral norm.
The proof for the case without skipped connection is a simple corollary.
This concludes the proof.

\section{Proof of Lemma~\ref{lem:zero2}}
\label{ap:landscape}
%
%
%
%
%
%

\begin{lemma}\label{lem:chainap}
For any $l\in [\kappa]$,  we have
\begin{eqnarray}
\frac{\partial \tilde\xi^{0}}{ \partial \tilde S^{l}} = \tilde\Upsilon^{l-1}\tilde\Lambda^l \left(\chi^{l\top}\otimes I_{d_{l-1}}
\right)
\end{eqnarray}
\end{lemma}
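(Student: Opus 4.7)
The plan is to compute the Jacobian $\partial\tilde\xi^0/\partial\tilde S^l$ by a two-step chain rule: first differentiate the single layer that explicitly contains $\tilde S^l$ with respect to this matrix parameter, and then propagate the result from $\tilde\xi^{l-1}$ back to $\tilde\xi^0$ through the remaining decoder layers, each of which contributes a factor $\tilde\Lambda^k D^k$ and telescopes into $\tilde\Upsilon^{l-1}$.

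\emph{Step 1 (direct derivative at layer $l$).} By the decoder update with a skipped branch, $\tilde\xi^{l-1}=\sigma(D^l\tilde\xi^l+\tilde S^l\chi^l)$. Here $\tilde\xi^l$ depends only on parameters strictly downstream of layer $l$ in the decoder and on the encoder variables, and $\chi^l$ depends only on the encoder, so both are constant with respect to $\tilde S^l$. Using the standard matrix-calculus identity $\partial (A\chi)/\partial A = \chi^{\top}\otimes I$ (under the vec convention) together with the fact that the derivative of the ReLU at the $l$-th decoder layer is represented by the diagonal indicator matrix $\tilde\Lambda^l$, I obtain
\begin{eqnarray*}
\frac{\partial\tilde\xi^{l-1}}{\partial\tilde S^l}=\tilde\Lambda^l\bigl(\chi^{l\top}\otimes I_{d_{l-1}}\bigr).
\end{eqnarray*}

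\emph{Step 2 (backward propagation to $\tilde\xi^0$).} For every $k<l$, the layer update $\tilde\xi^{k-1}=\sigma(D^k\tilde\xi^k+\tilde S^k\chi^k)$ gives $\partial\tilde\xi^{k-1}/\partial\tilde\xi^k=\tilde\Lambda^k D^k$, since $\chi^k$ and $\tilde S^k$ do not depend on $\tilde S^l$. Iterating the chain rule from $k=l-1$ down to $k=1$ yields
\begin{eqnarray*}
\frac{\partial\tilde\xi^0}{\partial\tilde\xi^{l-1}}=\tilde\Lambda^1 D^1\tilde\Lambda^2 D^2\cdots\tilde\Lambda^{l-1}D^{l-1},
\end{eqnarray*}
which is exactly $\tilde\Upsilon^{l-1}$ by the recursion $\tilde\Upsilon^k=\tilde\Upsilon^{k-1}\tilde\Lambda^k D^k$ with $\tilde\Upsilon^0=I_{d_0}$. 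An explicit induction on $l$ makes this step completely mechanical, mirroring the induction used in Lemma~\ref{lem:chain2}.

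\emph{Step 3 (combine).} Applying the chain rule $\partial\tilde\xi^0/\partial\tilde S^l=(\partial\tilde\xi^0/\partial\tilde\xi^{l-1})\cdot(\partial\tilde\xi^{l-1}/\partial\tilde S^l)$ and substituting the two factors above gives the claimed identity
\begin{eqnarray*}
\frac{\partial\tilde\xi^0}{\partial\tilde S^l}=\tilde\Upsilon^{l-1}\tilde\Lambda^l\bigl(\chi^{l\top}\otimes I_{d_{l-1}}\bigr).
\end{eqnarray*}
The main subtleties, none of which obstructs the argument but which have to be stated cleanly, are: (i) fixing a vec convention so that the Kronecker factor lands on the right of $\tilde\Lambda^l$ rather than the left; (ii) justifying the replacement of the Clarke subgradient of ReLU by the diagonal gating matrix $\tilde\Lambda^k$, which is the same convention used throughout Section 4; and (iii) verifying that every quantity encountered during backpropagation ($\chi^k$, $\tilde\xi^l$, $\tilde\Lambda^k$ for $k\neq l$) is indeed independent of $\tilde S^l$, so the chain rule reduces to a product of linear factors with no spurious cross-terms. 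Once these conventions are fixed, the proof is just the two-line chain rule above.
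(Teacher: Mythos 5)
Your proof is correct and follows essentially the same route as the paper's: the chain rule through the decoder layers (yielding the $\tilde\Upsilon^{l-1}\tilde\Lambda^l$ prefactor) combined with the identity $\mathrm{vec}(A\chi)=(\chi^{\top}\otimes I)\,\mathrm{vec}(A)$ for the derivative of $\tilde S^l\chi^l$ with respect to $\mathrm{vec}(\tilde S^l)$. The only difference is that you spell out the telescoping of $\partial\tilde\xi^0/\partial\tilde\xi^{l-1}$ into $\tilde\Upsilon^{l-1}$ explicitly, which the paper absorbs into a single appeal to the chain rule.
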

\begin{proof}
Using the chain rule and noting the $\dot{\tilde\Lambda}^l(x)=\tilde\Lambda^l(x)$, we have
$$\frac{\partial \tilde\xi^{0}}{ \partial \tilde S^{l}} = \tilde\Upsilon^{l-1}\tilde\Lambda^l\frac{\partial \tilde S^{l} \chi^{l}}{\partial \tilde S^{l}}.$$
Furthermore, note that $\mathrm{vec}(AXB)=\left(B^\top \otimes A\right)\mathrm{vec}(X)$ where $\mathrm{vec}(\cdot)$ denotes the vectorization operation.  Accordingly, we have
\begin{eqnarray*}
\frac{\partial \tilde S^l \chi^l}{\partial \tilde S^l} &:=& \frac{\partial \tilde S^l \chi^l}{\partial \mathrm{vec}(\tilde S^l)}\\
&=& \frac{\partial (\chi^{l\top}\otimes I_{d_{l-1}})\mathrm{vec}(\tilde S^l)}{\partial \mathrm{vec}(\tilde S^l)}\\
&=&\chi^{l\top}\otimes I_{d_{l-1}}
\end{eqnarray*}
This concludes the proof.
\end{proof}
\begin{lemma}\label{lem:sigma}
For $A\in \Rd^{n\times m}$ and $B\in \Rd^{m\times p}$ with $n\geq m$, 
\begin{eqnarray*}
\sigma_{\min}(A) \|B\|_F\leq \|AB\|_F \leq \sigma_{\max}(A) \|B\|_F
\end{eqnarray*}
\end{lemma}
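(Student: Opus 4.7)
The plan is to reduce to a diagonal case by applying the SVD of $A$ and invoking the unitary invariance of the Frobenius norm. Specifically, I would write $A = U\Sigma V^\top$ with $U \in \Rd^{n\times n}$ and $V \in \Rd^{m\times m}$ orthogonal and $\Sigma \in \Rd^{n\times m}$ carrying the $m$ singular values $\sigma_1(A) \geq \cdots \geq \sigma_m(A)$ of $A$ on its main diagonal, with zero rows below. The assumption $n \geq m$ is what makes this clean: it guarantees that $A$ has exactly $m$ singular values and that $\sigma_{\min}(A) = \sigma_m(A)$.

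Using unitary invariance of $\|\cdot\|_F$ under left-multiplication by $U^\top$, I get $\|AB\|_F = \|U^\top AB\|_F = \|\Sigma V^\top B\|_F$. Setting $C := V^\top B \in \Rd^{m\times p}$ and again using unitary invariance yields $\|C\|_F = \|V^\top B\|_F = \|B\|_F$. The diagonal structure of $\Sigma$ then makes the $i$-th row of $\Sigma C$ equal to $\sigma_i(A)\,c_i^\top$ for $i\in[m]$, with the remaining $n-m$ rows vanishing, where $c_i^\top$ denotes the $i$-th row of $C$. Summing the squared entries row by row gives
$$\|\Sigma C\|_F^2 \;=\; \sum_{i=1}^m \sigma_i(A)^2\, \|c_i\|_2^2 .$$

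Now I bound each $\sigma_i(A)^2$ above and below by $\sigma_{\max}(A)^2$ and $\sigma_{\min}(A)^2$ respectively, pull these constants out of the sum, and use $\sum_{i=1}^m \|c_i\|_2^2 = \|C\|_F^2 = \|B\|_F^2$ to conclude
$$\sigma_{\min}(A)^2\,\|B\|_F^2 \;\leq\; \|AB\|_F^2 \;\leq\; \sigma_{\max}(A)^2\,\|B\|_F^2.$$
Taking non-negative square roots delivers the claimed sandwich inequality.

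There is no real obstacle here; the only care needed is bookkeeping the dimensions so that the unitary invariance applies on both sides and so that every row of $C$ is actually multiplied by some singular value of $A$. This is exactly where $n\geq m$ is used: if $n<m$ the bottom $m-n$ rows of $C$ would be annihilated by $\Sigma$ regardless of $\sigma_{\min}(A)$, which would break the lower bound. In the rank-deficient case $\sigma_{\min}(A)=0$, the lower bound reduces to the trivial $0\leq \|AB\|_F$, and the upper bound continues to hold.
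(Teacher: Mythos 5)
Your proof is correct. Note, however, that the paper does not actually prove this lemma: it simply defers to Lemma H.3 of Nguyen and Hein (2018), so there is no internal argument to compare against. Your self-contained route---writing $A = U\Sigma V^\top$, using unitary invariance of $\|\cdot\|_F$ on both sides to reduce to $\|\Sigma C\|_F$ with $C = V^\top B$, and then reading off $\|\Sigma C\|_F^2 = \sum_{i=1}^m \sigma_i(A)^2\|c_i\|_2^2$ row by row---is the standard argument and is carried out without gaps. Your remark on the role of $n \geq m$ is also exactly right: it is what guarantees that every row of $C$ is scaled by some singular value of $A$, so the lower bound with $\sigma_{\min}(A) = \sigma_m(A)$ survives; if $n < m$ the trailing rows of $C$ would be annihilated and the lower bound would fail. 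The only thing your write-up adds beyond what the cited reference provides is transparency; there is nothing to correct.
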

\begin{proof}
See Lemma H.3 in \cite{nguyen2018optimization}.
\end{proof}

\subsection{Proof}

For the cost function \eqref{eq:cost}, we have
\begin{eqnarray*}
-\nabla_{\tilde S^l} C  &=& \sum_{i=1}^T \left(\frac{\partial \tilde S^l \chi^{(i)l}}{\partial \tilde S^l}\right)^\top \left(y^{(i)}-F(\Wb,x^{(i)})\right)
\end{eqnarray*}
\begin{eqnarray*}
&=&  \sum_{i=1}^T(\chi^{(i)l}\otimes I_{d_{l-1}})\tilde\Lambda^l(x^{(i)}) \left(\tilde\Upsilon^{l-1}(x^{(i)})\right)^\top \left(y^{(i)}-F(\Wb,x^{(i)})\right) \\
&=& \left(\Gamma^{l}\otimes I_{d_{l-1}}\right) \Db^l \Rb
\end{eqnarray*}
where
$$\Db^l := $$
\begin{eqnarray}\label{eq:Db}
\begin{bmatrix}  \tilde\Lambda^l(x^{(1)}) \left(\tilde\Upsilon^{l-1}(x^{(1)})\right)^\top & \cdots &
0 \\ \vdots & \ddots & \vdots \\ 0 & \cdots & \tilde\Lambda^l(x^{(T)}) \left(\tilde\Upsilon^{l-1}(x^{(T)})\right)^\top \end{bmatrix}
\end{eqnarray}
and 
$$\Rb = \begin{bmatrix} y^{(1)}-F(\Wb,x^{(1)}) \\ \vdots \\ y^{(T)}-F(\Wb,x^{(T)}) \end{bmatrix}$$
Because $\left(\Gamma^{l}\otimes I_{d_{l-1}}\right) \in \Rd^{s_ld_{l-1}\times Td_{l-1}}$ and
$s_l\geq T$, Lemma~\ref{lem:sigma} informs that
$$\|\nabla_{\tilde S^l} C\|_F$$
\begin{eqnarray*}
&\leq&  \sigma_{\max}\left(\Gamma^{l}\otimes I_{d_{l-1}}\right) \|\Db^l \Rb^l\|_F \\
&=& \sigma_{\max}\left(\Gamma^{l}\right) \|\Db^l \Rb^l\|_F \\
\end{eqnarray*}
Furthermore,
$\Db^l \in \Rd^{Td_{l-1} \times T d_0}$ and $d_{l-1}\geq d_0$,
Lemma~\ref{lem:sigma} informs that
 we have $\|\Db^l \Rb^l\|_F\leq \sigma_{\max}(\Db^l) \|\Rb^l\|_F$.
 Furthermore, from the definition in \eqref{eq:Db}, we have
$$\sigma_{\max}(\Db^l) =  \max_{i\in[T]} \sigma_{\max} \left(\tilde\Lambda^l(x^{(i)}) \left(\tilde\Upsilon^{l-1}(x^{(i)})\right)^\top\right).$$

Therefore,
$$\|\nabla_{\tilde S^l} C\|_F$$
\begin{eqnarray*}
&\leq & \sigma_{\max}\left(\Gamma^{l}\right) \max_{i\in[T]} \sigma_{\max} \left(\tilde\Lambda^l(x^{(i)}) \left(\tilde\Upsilon^{l-1}(x^{(i)})\right)^\top\right)\| \Rb\|_F \\
\end{eqnarray*}
Similarly, using Lemma~\ref{lem:sigma}, we have
$$\|\nabla_{\tilde S^l} C\|_F$$
\begin{eqnarray*}
&\geq &\sigma_{\min}\left(\Gamma^{l}\right) \min_{i\in[T]} \sigma_{\min} \left(\tilde\Lambda^l(x^{(i)}) \left(\tilde\Upsilon^{l-1}(x^{(i)})\right)^\top\right)\| \Rb\|_F \\
\end{eqnarray*}
Using
$\| \Rb\|_F^2 =\sum_{i=1}^T \|y^{(i)}-F(\Wb,x^{(i)})\|^2=2C(\Wb)$,  we conclude the proof.

\end{document}